\documentclass{article}
\usepackage[a4paper,margin=1in]{geometry}
\usepackage{amsmath,amssymb,mathtools,amsthm}
\usepackage[utf8]{inputenc}
\usepackage{graphicx}
\usepackage{authblk}

\usepackage{microtype}
\usepackage{booktabs}
\usepackage{bbm}

\usepackage{amsmath}
\usepackage{xcolor}
\usepackage{cancel}
\usepackage[hidelinks]{hyperref}
\usepackage{enumitem}
\usepackage{natbib}
\usepackage{outlines}
\usepackage{nicefrac}
\setlength{\parskip}{6pt}

\theoremstyle{definition}

\newtheorem{theorem}{Theorem}[section]
\newtheorem{lemma}[theorem]{Lemma}
\newtheorem{remark}[theorem]{Remark}

\newtheorem{corollary}[theorem]{Corollary}

\theoremstyle{definition}

\newtheorem{example}[theorem]{Example}

\theoremstyle{remark}

\usepackage{mathrsfs}


\title{Time dependent loss reweighting for flow matching and diffusion models is theoretically justified}

\author[1]{Lukas Billera\thanks{Correspondence to:
\href{mailto:lukas.billera@ki.se}{lukas.billera@ki.se} and
\href{mailto:benjamin.murrell@ki.se}{benjamin.murrell@ki.se}}}
\author[1]{Hedwig Nora Nordlinder}
\author[1]{Ben Murrell$^*$}

\affil[1]{Department of Microbiology, Tumor and Cell Biology, Karolinska Institutet}

\date{\today}

\begin{document}

\maketitle

\begin{abstract}
This brief note clarifies that, in Generator Matching (which subsumes a large family of flow matching and diffusion models over continuous, manifold, and discrete spaces), both the Bregman divergence loss and the linear parameterization of the generator can depend on both the current state $X_t$ and the time $t$, and we show that the expectation over time in the loss can be taken with respect to a broad class of time distributions. We also show this for Edit Flows, which falls outside of Generator Matching. That the loss can depend on $t$ clarifies that time-dependent loss weighting schemes, often used in practice to stabilize training, are theoretically justified when the specific flow or diffusion scheme is a special case of Generator Matching (or Edit Flows). It also often simplifies the construction of $X_1$-predictor schemes, which are sometimes preferred for model-related reasons. We show examples that rely upon the dependence of linear parameterizations, and of the Bregman divergence loss, on $t$ and $X_t$.

\end{abstract}
\section{Introduction}
Diffusion \citep{ho2020denoisingdiffusionprobabilisticmodels, song2021scorebasedgenerativemodelingstochastic} and Flow Matching \citep{lipmanflowcite} approaches to generative modeling both train a model to transport samples from a simple distribution to the data distribution. This family of approaches was recently subsumed under `Generator Matching', which characterizes a general space of states, processes, and losses under which conditional samples can be used to train a marginal generator.

The loss used for these models generally involves i) a term that compares the model's prediction to some function of the conditional process generator, and ii) a time-dependent scaling constant prescribed by the specific theoretical framework used to justify the method. Since the outset of the field (e.g., \cite{ho2020denoisingdiffusionprobabilisticmodels}) right up until recently \citep{nguyen2025oneflowconcurrentmixedmodalinterleaved} the prescribed time-dependent loss scaling is discarded or adjusted in practice, usually motivated empirically and often without theoretical justification (with exceptions, e.g. \cite{zhang2025trajectoryflowmatchingapplications, Oresten2025.11.03.686219}). 

With unlimited compute, if you trained a separate model for each time $t$ and state $X_t$, scaling each model's loss by a different positive constant should not change what that model learns, suggesting that time and state dependent loss scaling is justified and merely a practical issue. Here we formalize this notion, and extend it in ways that are trickier to heuristically reason about.

\section{Preliminaries}
Throughout this note, we make use of notation and definitions similar to those found in Generator Matching (GM) \citep{gmcite} and Flow Matching Guide (FMG) \citep{Lipman2024FlowMG}. Following FMG, generator matching prescribes a conditional probability path $p_t(dx|z)$ interpolating between a simple initial distribution $p_0$ and a target distribution $p_1$ on a state space $S$, conditioned on a latent state $z\in \mathcal Z$. With a distribution $p_Z$ over $\mathcal Z$, we consider the marginal probability path $p_t(dx)$ defined via the two-stage sampling procedure
\[Z \sim p_Z,  \ (X_t|Z=z) \sim p_{t|Z}(dx|z) \implies X_t \sim p_t(dx). \]
We assume the GM-specified regularity conditions hold (cf. Appendix~\ref{app:gm}), denoting our state space by $S$ and our class of test functions by $\mathcal T$. Under those conditions, the trajectories of a process $X_t$ are determined by i) its initial value, and 2) the infinitesimal generator --- meaning that if a neural network can learn to parametrize the infinitesimal generator, we then can transport from the initial distribution to the target distribution. We aim to parametrize the infinitesimal generator $\mathcal L_t$ of the process $X_t$ by a neural network $\mathcal L_t^\theta$.

 FMG suggests the following form of a linear parametrization of the generator of the process $X_t$: 
\[\mathcal L_tf(x) = \langle \mathcal Kf(x), F_t(x) \rangle_x\]
for a function $F_t$ such that $F_t(x) \in \Omega_x$, where $\Omega_x \subset V_x$ is a closed, convex subset of the inner product space $(V_x, \langle\cdot,\cdot\rangle_x).$ Analogously, FMG defines a linear parameterization of the conditional infinitesimal generator by
\[\mathcal L_t^zf(x) = \langle \mathcal Kf(x), F_t^z(x)\rangle_x\]
for a function $F_t^z$ such that $F_t^z(x) \in \Omega_x$. Then for Bregman divergences $D_{x} : \Omega_x\times \Omega_x\to \mathbb R$, FMG suggests the following form of a conditional generator matching loss
\[L_{\mathrm{cgm}}(\theta) =  \mathbb E_{t, Z\sim p_Z, X_t\sim p_{t|Z}(x|Z)}[D_{X_t}(F_t^Z(X_t), F_t^\theta(X_t))],\]
and up to a constant in $\theta$, this loss function coincides with the generator matching loss 
\[L_{\mathrm{gm}}(\theta) =  \mathbb E_{t, X_t\sim p_t}[D_{X_t}(F_t(X_t), F_t^\theta(X_t))],\]
so, their minimums also coincide.

We suggest that the class of generator matching loss functions be broadened to include those that have i) explicitly time- and state dependent linear parameterizations, ii) time- and state dependent Bregman divergences, and iii) a nearly arbitrarily weighted time distribution $t\sim \mathcal D$. We also extend this characterization to Edit Flows \citep{editflowscite}, which falls outside of the scope of GM. 
\section{Previous work} 
 The theoretical aspects of time-dependent loss re-weightings for flow and diffusion models are discussed in \citep{kingma2023variationaldiffusionmodels}, \citep{song2021maximumlikelihoodtrainingscorebased}, \citep{Lipman2024FlowMG}, \citep{esser2024scalingrectifiedflowtransformers}, and others. We do not attempt an exhaustive survey here and instead try to clarify this in a general setting.
 
\section{Result}
\subsection{Time and state varying linear parametrizations}\label{sec:timedeplinparam}
We extend the notion of a linear parametrization \citep{gmcite} to explicitly vary with time and state, whereas in \cite{Lipman2024FlowMG} it was already clarified to be able to vary with state, but to the best of our knowledge not with both. 

Following their notation, let $S$ be our state space, $\mathcal T$ a class of test functions on $S$ and let $B(S)$ be the bounded functions on $S$. Fix a time $t\in[0,1]$ and $x\in S$. A time and state varying linear parametrization of the subspace $W_t \subset \{ \mathcal L : \mathcal T \to B(S) \mid  \mathcal L \text{ is linear}\}$ is given by: i) a closed, convex set $\Omega_{t,x} \subset V_{t,x} $, where $(V_{t,x}, \langle\cdot,\cdot\rangle_{V_{t,x}})$ forms an inner product space; and ii) a linear operator $\mathcal K_{t,x}: \mathcal T \to V_{t,x}$ such that every $\mathcal L_t \in W_t$ can be written as 
 \[\mathcal L_tf(x) = \langle \mathcal K_{t,x}f, \  F_{t}(x) \rangle_{V_{t,x}},\]
 for a function $F_t: S \to \bigsqcup_{x\in S} \Omega_{t,x}$ such that $F_t(x) \in \Omega_{t,x}$ for every time $t$ and $x\in S$. If such a construction exists, then $F_t$ is said to linearly parametrize $W_t$, or with the appropriate context, to linearly parametrize the generator $\mathcal L_t$. 
 
 Assume that for each $z \in \mathcal Z$, where $\mathcal Z$ is a latent conditioning space, the conditional Markov process $X_t^z$ has a conditional infinitesimal generator $\mathcal L_t^z$. Then a time and state varying linear parametrization of $\mathcal L_t^z$ is given by
 \[\mathcal L_t^z f(x) = \langle \mathcal K_{t,x}f, \  F_t^z(x) \rangle_{V_{t,x}},\]
 where $F_{t}^z: S\to \bigsqcup_{x\in S}\Omega_{t,x}$ is such that $F_t^z(x) \in \Omega_{t,x}$ for every time $t$, $x\in S$ and $z\in \mathcal Z$, and we require the following regularity conditions:
\begin{enumerate}
    \item \label{assump:linparam} For each time $t\in [0,1]$ and $x\in S$, it holds that $\dim (V_{t,x}) < \infty$ and \[\mathbb E_{Z\sim p_{Z|t}(dz|x)}\|F_t^Z(x)\|_{V_{t,x}}<\infty,\]
    where $p_{Z|t}(dz|x)$ is the posterior distribution $(Z|X_t=x)$. 
 \item\label{assump:modelgm} The GM regularity assumptions listed in Appendix~\ref{app:gm} hold for the marginal and model parametrized generators and their associated Markov processes. For the model parametrized generators, we write
 \[\mathcal L_t^\theta f(x) = \langle \mathcal K_{t,x}f, F_t^\theta(x)\rangle_{V_{t,x}},\]
 and we denote $X_t^\theta$ for the associated Markov process. For the marginal generator, we write 
 \[\mathcal L_tf(x) = \langle \mathcal K_{t,x} f, F_t(x) \rangle_{V_{t,x}}\]
 and simply denote $X_t$ for the associated Markov process. See Theorem~\ref{thm:prop1} for a connection between the conditional and marginal parametrizations. 
\end{enumerate} 
\begin{remark}
    We believe that the theory mentioned here can be extended to when, e.g., $V_{t,x}$ is a Hilbert space, and that in this case similar conditions hold. However, for simplicity, we consider only when $\dim V_{t,x} < \infty$, because we need to parametrize the vectors in $V_{t,x}$ by a neural network $F_t^\theta(x)$. 
\end{remark}
 \subsection{Time and state varying loss}\label{section:loss}
In this section, we extend the concept of a generator matching loss (GM loss) and a conditional generator matching loss (CGM loss) as introduced in GM and FMG, to allow for a general class of time distributions $\mathcal D$, with an extended class of Bregman divergences $D_{t,x}$ that depend on both time and state, applied to linear parameterizations that are time and state dependent. 

As in e.g. \cite{JMLR:v6:banerjee05b} and \cite{bregmanart}, we consider Bregman divergences corresponding to $\phi: \Omega\to \mathbb R$ that might only differentiable on the relative interior of $\Omega$, denoted $\mathrm{ri}(\Omega)$, which is defined as the interior of $\Omega$ taken within its affine hull (see Remark~\ref{remark:bregex}). This is in contrast to requiring differentiability on all of $\Omega$. In particular, this is useful for rigorously defining the Poisson-like Bregman divergence and the Binary Cross Entropy Bregman divergence in Examples~\ref{ex:bregpoiss} and~\ref{ex:bregbce}. To our knowledge, Bregman divergences were only defined in association with convex functions differentiable on all of $\Omega$ in GM and FGM, making it difficult to rigorously reason about Bregman divergences associated with convex functions that are not differentiable on the boundary. 

For each time $t\in [0,1]$ and $x\in S$, let
$\Omega_{t,x}$ be a closed convex subset of the inner product space
$(V_{t,x},\langle\cdot,\cdot\rangle_{V_{t,x}})$ and let $\phi_{t,x}:\Omega_{t,x}\to\mathbb R$ be a strictly convex function such that either: (i) $\phi_{t,x}$ is differentiable on all of $\Omega_{t,x}$, in which case we set $\widehat\Omega_{t,x} := \Omega_{t,x}$; or, (ii) $\phi_{t,x}$ is continuous on $\Omega_{t,x}$ and differentiable on $\mathrm{ri}(\Omega_{t,x})$, in which case we set $\widehat\Omega_{t,x} := \mathrm{ri}(\Omega_{t,x})$, where $\mathrm{ri}(\Omega_{t,x})$ is the relative interior of $\Omega_{t,x}$, as seen in Remark~\ref{remark:bregex}.

For $\phi_{t,x}$ of type (i) and (ii), we define the Bregman divergence $D_{t,x}: \Omega_{t,x} \times \widehat \Omega_{t,x} \to \mathbb R$ of type (i) resp. (ii) associated to $\phi_{t,x}$ by 
\[
  D_{t,x}(a,b)= \phi_{t,x}(a) - \phi_{t,x}(b) - \langle a-b,\nabla\phi_{t,x}(b)\rangle_{V_{t,x}}.
\] 
For Bregman divergences with $\widehat\Omega_{t,x} = \mathrm{ri}(\Omega_{t,x})$, a closure property is required that, in practice, is nonrestrictive (cf. Assumption~\ref{assump:int3} and its succeeding remark). Note that whenever $\Omega_{t,x} = V_{t,x}$, so that the loss function is defined on the whole space, types (i) and (ii) coincide. 

The class of time distributions $\mathcal D$ that we consider are probability measures on $[0,1]$ that dominate the Lebesgue measure $\lambda$, denoted $\mathcal D \gg \lambda$ (see Remark~\ref{remark:measuredom} for further clarification). The class of weighting functions $w(t)$ are i) nonnegative on $[0,1]$ and such that $\mathbb E_{t\sim \mathcal D}[w(t)] < \infty$; and, ii) positive except for a set of Lebesgue measure zero. Note that reweighting the loss by such a $w(t)$ is tantamount to taking the expectation over an alternative time distribution which also dominates the Lebesgue measure and selecting an alternative Bregman divergence rescaled by a constant factor, as discussed in Lemma~\ref{lemma:timeweight}. 

We extend the notion of a GM loss to allow for 
\[L_{\mathrm{gm}}(\theta) = \mathbb E_{t\sim \mathcal D, X_t\sim p_t(dx)}[w(t)D_{t,X_t}(F_t(X_t), F_t^\theta(X_t))] \]
and we similarly extend the CGM loss to allow 
\[L_{\mathrm{cgm}}(\theta) = \mathbb E_{t \sim \mathcal D, X_t\sim p_t(dx), Z\sim p_{Z|t}(dz|x)}[w(t)D_{t,X_t}(F_t^Z(X_t), F_t^\theta(X_t))].\]
In the above,  we require the following integrability conditions: 
\begin{enumerate}
    \item \label{assump:int1} $\mathbb E_{t\sim \mathcal D, X_t\sim p_t(dx)}[w(t)D_{t,X_t}(F_t(X_t), F_t^\theta(X_t))]  < \infty$,
    \item \label{assump:int2} $\mathbb E_{t\sim \mathcal D, X_t\sim p_t, Z \sim p_{Z|t}(dz|x)}w(t)[D_{t,X_t}(F_t^Z(X_t), F_t(X_t))] < \infty,$
    \item\label{assump:int3} $\mathbb E_{Z\sim p_{Z|t}(dz)}[F_t^Z(x)] \in \widehat \Omega_{t,x}$ and $F_t^\theta(x) \in \widehat \Omega_{t,x}$. 
\end{enumerate}
\begin{remark}
    The third condition is automatically true if $\widehat\Omega_{t,x} = \Omega_{t,x}$, since $\Omega_{t,x}$ is convex and closed. For $\widehat \Omega_{t,x} = \mathrm{ri}(\Omega_{t,x})$, the condition is not particularly restrictive. Indeed, if it does not hold, then the essential convex hull of $z\mapsto F_t^z(x)$ lies in the boundary of $\Omega_{t,x}$, as noted in the case when $\mathcal Z $ is finite in \cite{JMLR:v6:banerjee05b}. If this condition doesn't hold exactly, then you can only ever be able to approach the true expectation, and this assumption is implicit in e.g. parametrizing probabilities via a sigmoid function in discrete models --- in which case, the model cannot ever emit a probability of exactly $0$ or $1$ with finite logits.  
\end{remark}

\begin{remark}\label{remark:measuredom}
The condition that $\mathcal  D $ dominates $\lambda$, i.e., $\mathcal D \gg \lambda$, is equivalent to the condition that for any $A \subset [0,1]$ with $\lambda(A) > 0$, it also holds $\mathcal D(A) >0$. In other words, the time distribution must assign positive mass to any subset of $[0,1]$ with positive Lebesgue measure. Equivalently, we could have said that $\lambda$ is absolutely continuous with respect to $\mathcal D$.
\end{remark}
\begin{remark}
    A property is said to hold $\lambda$-almost everywhere if the set at which it does not hold has $\lambda$-measure zero. The condition that $w(t)$ is positive except for a set of Lebesgue measure zero could have also been stated as $w(t)$ being positive $\lambda$-almost everywhere. 
\end{remark}
\begin{example}
    It e.g. holds that $\mathcal D \gg \lambda$ if $\mathcal D$ has a density that is positive except for countably many points.
\end{example}
\begin{example}
The density could, for example, be zero on either or both sides of the boundary, as for a $\mathrm{Beta}(\alpha,\beta)$ distribution. 
\end{example}
\subsection{Time dependent loss reweighting}
Note: In what follows, as in GM, we denote 
\[\langle \mu ,f \rangle = \int f(x) \mu(dx)\]
for the duality pairing between probability measures $\mu$ on $S$ and test functions $f\in \mathcal T$.

\begin{lemma}\label{lemma:timeweight}

    Let $\mathcal D$ be any probability measure on $[0,1]$ dominating the Lebesgue measure, i.e. $\mathcal D \gg \lambda$, and let $w(t) \geq 0 $ satisfy $\mathbb E_{t\sim \mathcal D}[w(t)] < \infty$ and $w(t) >0$ for $\lambda$-almost every $t\in [0,1]$. Then 
    \[\widetilde {\mathcal D}(dt) := \frac{w(t)}{\int w(t) \mathcal D(dt)} \mathcal D(dt)\]
    is a probability measure on $[0,1]$ such that $\widetilde D\gg \lambda$, and it holds
    \[\mathbb E_{t\sim \mathcal D} [w(t) f(t)] = K\cdot \mathbb E_{t\sim \widetilde{\mathcal{D}}}[f(t)] \]
    where $K = \int w(t) \mathcal D(dt) >0$.
\end{lemma}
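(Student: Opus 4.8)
The plan is to recognize this as a standard change-of-measure statement and verify the three asserted claims in turn: that the normalizing constant $K$ lies in $(0,\infty)$, that $\widetilde{\mathcal D}$ is a probability measure dominating $\lambda$, and that the reweighting identity holds by the density-transformation formula.

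First I would establish $0 < K < \infty$. Finiteness is immediate from the hypothesis $\mathbb E_{t\sim\mathcal D}[w(t)] < \infty$. For strict positivity I would argue by contradiction: if $K = \int w(t)\,\mathcal D(dt) = 0$, then $w = 0$ holds $\mathcal D$-almost everywhere, so $\mathcal D(\{w > 0\}) = 0$. But $w(t) > 0$ for $\lambda$-almost every $t$, whence $\lambda(\{w>0\}) = 1 > 0$, and the domination hypothesis $\mathcal D \gg \lambda$ (which, as clarified in Remark~\ref{remark:measuredom}, means $\lambda(A) > 0 \Rightarrow \mathcal D(A) > 0$) forces $\mathcal D(\{w>0\}) > 0$, a contradiction. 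Hence $K > 0$.

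Given $K \in (0,\infty)$, the measure $\widetilde{\mathcal D}$ is a well-defined nonnegative measure with total mass $\frac{1}{K}\int w(t)\,\mathcal D(dt) = 1$, so it is a probability measure. For $\widetilde{\mathcal D} \gg \lambda$ I would take any $A \subset [0,1]$ with $\lambda(A) > 0$ and intersect with the full-measure set $\{w>0\}$: since $\lambda(\{w=0\}) = 0$, the set $A' := A \cap \{w>0\}$ still has $\lambda(A') > 0$, so $\mathcal D(A') > 0$ by domination, and because $w > 0$ on $A'$ the integral satisfies $\int_A w\,d\mathcal D \geq \int_{A'} w\,d\mathcal D > 0$ (a strictly positive integrand over a set of positive measure has positive integral, seen by decomposing $A'$ along $\{w > 1/n\}$). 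Thus $\widetilde{\mathcal D}(A) = \frac{1}{K}\int_A w\,d\mathcal D > 0$.

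Finally, the reweighting identity is the defining property of a measure given by a density: writing $\mathbb E_{t\sim\widetilde{\mathcal D}}[f(t)] = \int f\,d\widetilde{\mathcal D} = \frac{1}{K}\int f(t)w(t)\,\mathcal D(dt) = \frac{1}{K}\mathbb E_{t\sim\mathcal D}[w(t)f(t)]$ and multiplying through by $K$. This holds for nonnegative $f$ by the usual simple-function approximation together with monotone convergence, and extends to any $f$ for which the relevant expectations are finite. The only genuine subtlety, and the single step I would be most careful about, is the positivity of $K$ and of $\widetilde{\mathcal D}$ on $\lambda$-positive sets, since both rest on correctly combining the domination $\mathcal D \gg \lambda$ with the $\lambda$-almost-everywhere positivity of $w$; everything else is bookkeeping.
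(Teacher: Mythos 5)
Your proposal is correct and follows essentially the same route as the paper: intersecting $A$ with the full-$\lambda$-measure set $\{w>0\}$, invoking $\mathcal D\gg\lambda$, and extracting positivity of $\int_A w\,d\mathcal D$ via the decomposition along $\{w>1/n\}$ is exactly the paper's argument, with only the cosmetic difference that you obtain $K>0$ by contradiction while the paper applies the same positivity argument directly to $A=[0,1]$. No gaps.
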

\begin{proof}
    Fix $A \subset [0,1]$ with $\lambda(A) > 0$. Since $w(t) >0$ for $\lambda$-almost every $t\in [0,1]$, we have \[\lambda(A \cap \{w(t) >0\}) >0 ,\] as $\lambda(A\cap \{w(t) =0\}) \leq \lambda(\{w(t) =0\}) = 0$ and thus
    \begin{align*}
        \lambda(A) &=\lambda(A\cap \{w(t) >0\}) + \lambda(A\cap \{w(t)=0\}) = \lambda(A\cap \{w(t) >0\}).
    \end{align*}
     This implies that  $\mathcal D(A\cap \{w(t)>0\}) >0$, since $\mathcal D \gg \lambda$. It follows that there must be an $n_0 \in \mathbb N$ such that $\mathcal D(A\cap \{w(t) > n_0^{-1}\}) > 0 $, since 
    \begin{align*}
    0 < \mathcal D(A\cap \{w(t)>0\}) &= \mathcal D\left(\bigcup_{n=1}^\infty A\cap \{w(t) >n^{-1}\}\right)\\
    &\leq \sum_{n=1}^\infty \mathcal D(A \cap \{w(t) > n^{-1}\}),
    \end{align*}
    and if there were no such $n_0$, then each term in the sum on the right hand side would be zero. As we have assumed $w(t) \geq 0$ for all $t\in [0,1]$, we can write 
    \begin{align*}
        \int_A w(t) \mathcal D(dt) &=  \int_{A\cap \{w(t) >0\}} w(t) \mathcal D(dt)+\int_{A\cap \{w(t) = 0\}} w(t) \mathcal D(dt)\\
        &=\int_{A\cap \{w(t)> 0\}} w(t) \mathcal D(dt),
    \end{align*}
    noting that the second integral in the first equality vanishes, since $w(t) = 0$ for every $t\in A \cap \{w(t) = 0\}$ . Now, through the bound $\mathcal D(A\cap \{w(t) > n_0^{-1}\}) >0$, we obtain
    \begin{align*}
       \int_A w(t) \mathcal D(dt) =  \int_{A\cap \{w(t) >0\}} w(t) \mathcal D(dt) &\geq \int_{A\cap \{w(t) > n_0^{-1}\}}w(t) \mathcal D(dt) \\
        &\geq \int_{A\cap \{w(t) > n_0^{-1}\}}n_0^{-1} \mathcal D(dt) \\
        &= n_0^{-1} \mathcal D(A\cap \{w(t) > n_0^{-1}\}) \\
        &>0,
    \end{align*}
    which shows that $\int_A w(t) \mathcal D(dt) >0$ whenever $\lambda(A) >0$. Let $K := \int_{[0,1]} w(t) \mathcal D(dt)$.  Then since $\lambda([0,1]) = 1 > 0$, it follows that $K >0$. Also note that $K < \infty$, since by assumption $\int w(t) \mathcal D(dt) = \mathbb E_{t\sim \mathcal D} [w(t)] < \infty.$ Hence,
    \[\widetilde{\mathcal D}(A) = K^{-1} \int_A w(t) \mathcal D(dt) > 0\]
    for all $A\subset [0,1]$ with $\lambda( A) > 0$. Hence, $\widetilde{D} \gg \lambda$, and we conclude by noting that 
    \begin{align*} 
    \mathbb E_{t\sim \widetilde{\mathcal D}}[f(t) ] = \int f(t) \widetilde{\mathcal D}(dt) &= K^{-1}\int f(t) w(t) \mathcal D(dt)   \\
    &=  K^{-1} \cdot  \mathbb E_{t\sim \mathcal D}[w(t)f(t) ].
    \end{align*}
\end{proof}
\begin{theorem}\label{thm:timedistribution}
    Let $L_\mathrm{gm}(\theta)$ be the generator matching loss defined in Section~\ref{section:loss},\[L_{\mathrm{gm}}(\theta) = \mathbb E_{t\sim \mathcal D, X_t\sim p_t(dx)}[w(t)D_{t,X_t}(F_t(X_t), F_t^\theta(X_t))] \]
    where the time distribution $\mathcal D$ dominates the Lebesgue measure $\lambda$, i.e. $\mathcal D \gg \lambda$, and the weighting function $w(t)$ is positive $\lambda$-almost everywhere. Suppose that \[L_\mathrm{gm}(\theta) = 0\]and that the GM regularity assumptions listed in Appendix~\ref{app:gm} hold for $\mathcal L_t$ and $\mathcal L_t^\theta$, for the same class of test functions $\mathcal T$. Then the model parametrized generator $\mathcal L_t^\theta$ solves the Kolmogorov Forward Equation (KFE) for $p_t$ on $[0,1]$, and the Markov process $X_t^\theta$ associated to $\mathcal L_t^\theta$ with the initial distribution $p_0$ will satisfy $X_1^\theta \sim p_1(dx)$.
\end{theorem}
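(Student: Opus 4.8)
The plan is to exploit the two structural features of the loss: that each Bregman divergence $D_{t,x}$ is nonnegative and, by strict convexity of $\phi_{t,x}$, vanishes exactly when its two arguments coincide; and that the integrand is the product of the nonnegative weight $w(t)$ with this nonnegative divergence. Writing the loss against the joint law $\mu(dt,dx)=\mathcal D(dt)\,p_t(dx)$, the hypothesis $L_{\mathrm{gm}}(\theta)=0$ forces the integrand $w(t)\,D_{t,X_t}(F_t(X_t),F_t^\theta(X_t))$ to vanish for $\mu$-almost every $(t,x)$. The goal is to upgrade this to the statement that $F_t^\theta(x)=F_t(x)$ for $p_t$-almost every $x$, at Lebesgue-almost every $t$, since that is precisely what is needed to transport the Kolmogorov Forward Equation from $\mathcal L_t$ to $\mathcal L_t^\theta$. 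Here condition~\ref{assump:int3} (placing $F_t^\theta(x)$ in $\widehat\Omega_{t,x}$, the differentiability domain of the second slot) together with strict convexity guarantees both that $D_{t,x}(F_t(x),F_t^\theta(x))$ is well-defined and that it admits the clean characterization $D_{t,x}(a,b)=0\iff a=b$.

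The measure-theoretic core, and the step I expect to be most delicate, is peeling the almost-everywhere statement apart in the correct order. First I would apply Tonelli to the kernel $\mu$ (valid under the GM measurability of $t\mapsto p_t$) to rewrite the vanishing as $\int p_t(B_t)\,\mathcal D(dt)=0$, where $B_t=\{x: w(t)D_{t,x}(F_t(x),F_t^\theta(x))>0\}$. Since $\{w(t)>0\}$ depends on $t$ alone, and on that set $D_{t,x}>0$ is equivalent to $F_t^\theta(x)\neq F_t(x)$, this gives $p_t(A_t)=0$ for $\mathcal D$-almost every $t$ with $w(t)>0$, where $A_t=\{x:F_t^\theta(x)\neq F_t(x)\}$. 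The crucial and easily-misused step is then the conversion from $\mathcal D$-almost-everywhere to $\lambda$-almost-everywhere: the set $E=\{t: w(t)>0,\ p_t(A_t)>0\}$ satisfies $\mathcal D(E)=0$, and because $\mathcal D\gg\lambda$ means exactly that $\mathcal D$-null sets are $\lambda$-null (equivalently $\lambda\ll\mathcal D$, per Remark~\ref{remark:measuredom}), we conclude $\lambda(E)=0$; combining this with $\lambda(\{w=0\})=0$ (the contribution of $w>0$ $\lambda$-a.e.) yields $p_t(A_t)=0$ for $\lambda$-almost every $t$. It is worth stressing that the domination hypothesis is used in precisely this direction, and that the weight's role is to be positive $\lambda$-a.e. rather than $\mathcal D$-a.e.

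With $F_t^\theta(x)=F_t(x)$ holding $p_t$-a.e. at $\lambda$-a.e. $t$, the linear parametrization $\mathcal L_t f(x)=\langle \mathcal K_{t,x}f, F_t(x)\rangle_{V_{t,x}}$ and its model counterpart give $\mathcal L_t^\theta f(x)=\mathcal L_t f(x)$ for $p_t$-a.e. $x$, and since these operators enter the forward equation only through their pairing against $p_t$, we obtain $\langle p_t,\mathcal L_t^\theta f\rangle=\langle p_t,\mathcal L_t f\rangle$ for every $f\in\mathcal T$ and $\lambda$-a.e. $t$. By the marginalization result of GM (cf. Theorem~\ref{thm:prop1}) the marginal generator $\mathcal L_t$ already satisfies the weak KFE $\tfrac{d}{dt}\langle p_t,f\rangle=\langle p_t,\mathcal L_t f\rangle$, so substituting the identity above shows $\tfrac{d}{dt}\langle p_t,f\rangle=\langle p_t,\mathcal L_t^\theta f\rangle$; that is, $\mathcal L_t^\theta$ solves the KFE for $p_t$ on $[0,1]$. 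Finally I would invoke the uniqueness of KFE solutions furnished by the GM regularity assumptions in Appendix~\ref{app:gm}: the law of the process $X_t^\theta$ started from $p_0$ solves the same KFE with generator $\mathcal L_t^\theta$ and initial datum $p_0$, as does $p_t$, so uniqueness forces the two to coincide for all $t\in[0,1]$; evaluating at $t=1$ yields $X_1^\theta\sim p_1(dx)$, as claimed.
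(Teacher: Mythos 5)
Your argument tracks the paper's proof closely through the measure-theoretic core: the vanishing of the nonnegative integrand, the identification $D_{t,x}(a,b)=0\iff a=b$ via strict convexity (Corollary~\ref{cor:breg}(c)), and the careful conversion from $\mathcal D$-a.e.\ to $\lambda$-a.e.\ using that $\mathcal D\gg\lambda$ means $\mathcal D$-null sets are $\lambda$-null. Your direct handling of $w(t)$ (folding it into the a.e.\ bookkeeping) is a legitimate alternative to the paper's first move, which is to invoke Lemma~\ref{lemma:timeweight} to absorb $w$ into a new dominating time distribution $\widetilde{\mathcal D}$ and reduce to $w\equiv 1$; both routes work.

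However, there is a genuine gap at the end. After the measure-theoretic step you have $\langle p_t,\mathcal L_t^\theta f\rangle=\langle p_t,\mathcal L_t f\rangle$ only for $\lambda$-almost every $t\in[0,1]$, but the theorem asserts that $\mathcal L_t^\theta$ solves the KFE \emph{on all of} $[0,1]$, and Assumption~\ref{assump:gm5} is invoked with the KFE holding pointwise. You cannot pass from a $\lambda$-a.e.\ identity to an everywhere identity, nor directly to the conclusion $X_1^\theta\sim p_1$, without an additional argument; a null set of times at which the generators disagree in the pairing could in principle matter for the process $X_t^\theta$. The paper closes this gap by showing (Theorem~\ref{thm:dualcont}, using the Feller property, the continuity of $t\mapsto\mathcal L_t f$, and the continuity of $t\mapsto\langle p_t,f\rangle$ from Appendix~\ref{app:gm}) that $h_f^\theta(t):=\langle p_t,\mathcal L_t^\theta f\rangle-\langle p_t,\mathcal L_t f\rangle$ is \emph{continuous} on $[0,1]$; a continuous function vanishing $\lambda$-a.e.\ vanishes identically, which yields the KFE for every $t$ and licenses Assumption~\ref{assump:gm5}. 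Your appeal to ``uniqueness of KFE solutions'' is essentially Assumption~\ref{assump:gm5} itself, but you need the everywhere-in-$t$ version of the KFE before you may apply it; the continuity step is the missing idea.
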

\begin{proof}
    Since $\mathcal D \gg \lambda$ and $w(t) >0$ for $\lambda$-almost every $t\in [0,1]$, we use Lemma~\ref{lemma:timeweight} to obtain $\widetilde D \gg \lambda$ and a constant $K >0$ such that 
    \begin{align*}
    L_\mathrm{gm}(\theta) &= \mathbb E_{t\sim \mathcal D, X_t \sim p_t(dx)}[w(t)D_{t,X_t}(F_t(X_t), F_t^\theta(X_t))] \\
    &= \mathbb E_{t\sim \mathcal D}[w(t) \mathbb E_{X_t\sim p_t(dx)}[D_{t,X_t}(F_t(X_t),F_t^\theta(X_t))]\\
    &= K^{-1}\cdot  \mathbb E_{t\sim \widetilde {\mathcal D}} [\mathbb E_{X_t \sim p_t(dx)}[D_{t,X_t}(F_t(X_t), F_t^\theta(X_t))] \\
    &= K^{-1} \cdot \mathbb E_{t\sim \widetilde D, X_t\sim p_t(dx)}[ D_{t,X_t}(F_t(X_t),F_t^\theta(X_t))].
    \end{align*}
    Since rescaling the loss by a positive constant factor simply corresponds to a different choice of Bregman divergence (cf. Example~\ref{ex:timebreg}), it therefore suffices to only consider the case $w(t) \equiv 1$ and $\mathcal D \gg \lambda$ in the proof going forward.

    Suppose that $L_\mathrm{gm}(\theta) =0 $. For each $t\in [0,1]$ and $f\in \mathcal T$, consider the set $E_t := \{ x \in S: \mathcal L_t^\theta f(x) = \mathcal L_t f(x)\}$, and let $A \subset [0,1]$ be the set of all $t\in [0,1]$ such that $p_t(S\setminus E_t) = 0$. 

    Since Bregman divergences are non-negative, $L_\mathrm{gm}(\theta) = 0$ forces \[D_{t,X_t}(F_t(X_t), F_t^\theta(X_t)) = 0\]to hold $p_t$-almost surely for $\mathcal D$-almost every $t \in [0,1]$, and thus $\lambda$-almost every $t\in [0,1]$ since $\mathcal D \gg \lambda$. By Corollary~\ref{cor:breg}(c), \[D_{t,x}(F_t(x), F_t^\theta(x)) = 0  \iff F_t(x) = F_t^\theta(x),\]
    and therefore $\mathcal L_t^\theta f(X_t) = \mathcal L_tf(X_t) $ holds $p_t $-almost surely for $\lambda$-almost every $t\in [0,1]$, from which it follows that $\lambda(A) = 1$. Now consider $h_f^\theta: [0,1]\to \mathbb R$, defined by  \[h_f^\theta(t) := \langle p_t, \mathcal L_t^\theta f\rangle - \langle p_t, \mathcal L_t f\rangle.\] Then, for every $t\in A$ we have
    
    \begin{align*} 
    h_f^\theta(t) = \langle p_t , \mathcal L_t^\theta f - \mathcal L_t f\rangle &= \int_S [\mathcal L_t^\theta f(x) - \mathcal L_t f(x)] p_t(dx) \\
    &= \int_{E_t} [\mathcal L_t^\theta f(x) - \mathcal L_t f(x)] p_t(dx) + \int_{S\setminus E_t} [\mathcal L_t^\theta f(x) - \mathcal L_t f(x)]p_t(dx) \\
    &= 0 ,
    \end{align*}
    where in the last equality, we have used that $\mathcal L_t^\theta f(x) - \mathcal L_tf(x) = 0$ for all $x\in E_t$ and that $p_t(S\setminus E_t) = 0$. 

    By the argument of Theorem~\ref{thm:dualcont}, applied to $\mathcal L_t$ and $\mathcal L_t^\theta$, both $t\mapsto \langle p_t, \mathcal L_t f\rangle $ and $t\mapsto \langle p_t, \mathcal L_t^\theta f\rangle $ are continuous on $[0,1]$, and hence $h_f^\theta(t)$ is continuous on $[0,1]$. 

    Since the GM regularity assumptions are assumed to hold for $\mathcal L_t$ and $\mathcal L_t^\theta$, we can apply Theorem~\ref{thm:dualcont} to $\mathcal L_t$ and $\mathcal L_t^\theta$ to see that $h_f^\theta(t)$ is continuous. Since $h_f^\theta(t) = 0$ for $\lambda$-almost every $t\in [0,1]$, it follows that $h_f^\theta(t) = 0$ for all $t\in [0,1]$ by continuity. 
    
    This implies that for all $t\in [0,1]$, the KFE holds
    \begin{align*} 
    \langle p_t , \mathcal L_t^\theta f\rangle&= \langle p_t, \mathcal L_t f\rangle \\
    &=  \partial_t \langle p_t, f\rangle
    \end{align*}
    which by Assumption~\ref{assump:gm5} implies that $X_1^\theta \sim p_1(dx)$. 
\end{proof}
\begin{example}
    Suppose $\mathcal D$ has a density against the Lebesgue measure that is positive Lebesgue-almost everywhere in $[0,1]$. That is, it would permissible if its density was zero on a set of Lebesgue-measure zero. Then $\mathcal  D \gg \lambda$ since $A\subset [0,1]$ and $\lambda(A) > 0$ implies $\mathcal D(A) > 0$, so we may take our loss to be weighted by $\mathcal D$. That is, it is permissible for $\mathcal D$ to have a probability density function on $[0,1]$ that is positive except for a set of Lebesgue-measure zero. 
\end{example}
\begin{example}
        It would, for example, be permissible if $\mathcal D$ had a density that vanished at finitely many points of $[0,1]$, so e.g. either or both boundary points of $[0,1]$, e.g., a $\mathrm{Beta}(\alpha,\beta)$ distribution. It would also be permissible if $\mathcal D$ had a density that vanished at countably many points of $[0,1]$. 
\end{example}

 \subsection{Generator Matching}
 Assume for every $z\in \mathcal Z$ there corresponds a conditional Markov process $X_t^z$ with conditional infinitesimal generator $\mathcal L_t^z$ and linear parametrization 
 \[\mathcal L_t^zf = \langle \mathcal K_{t,x}f; F_t^z(x)\rangle_{V_{t,x}},\]
  as is described in Section~\ref{sec:timedeplinparam}. By Proposition 1 in GM, we have that
 \[\mathcal L_t f(x) = \mathbb E_{Z \sim p_{Z|t}(dz| x)}[\mathcal L_t^Zf(x)]\]
 generates $p_t(dx) = \int_{\mathcal Z} p_t(dx|z)p_{Z}(dz)$ --- i.e., the KFE holds: 
\[\partial_t \langle p_t, f\rangle = \langle p_t, \mathcal L_t f\rangle .\]

The proof of Proposition 1 in GM Appendix C.2 for time and state varying linear parametrizations runs through the same and is adapted below:
 \begin{theorem}\label{thm:prop1}
    Let \[ \mathcal L_t^z f(x) = \langle \mathcal K_{t,x} f; F_t^z(x)\rangle_{V_{t,x}}\]
be a linear parametrization of $\mathcal L_t^z$, as in Section~\ref{sec:timedeplinparam}, for $F_t^z(x) \in \Omega_{t,x} \subset V_{t,x}$ convex and closed. Then it follows that
\[F_t(x) := \mathbb E_{Z\sim p_{Z|t}(dz|x)}[F_t^Z(x)]\]
linearly parametrizes the marginal generator.
\end{theorem}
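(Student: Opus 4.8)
The plan is to verify the two defining properties of a linear parametrization from Section~\ref{sec:timedeplinparam}: that the candidate vector $F_t(x) = \mathbb E_{Z\sim p_{Z|t}(dz|x)}[F_t^Z(x)]$ actually lands in the admissible set $\Omega_{t,x}$, and that substituting it into the bilinear form with $\mathcal K_{t,x}$ reproduces the marginal generator $\mathcal L_t f(x) = \mathbb E_{Z\sim p_{Z|t}(dz|x)}[\mathcal L_t^Z f(x)]$ that, by the displayed Proposition~1 of GM, generates $p_t$. Both properties are local, to be checked for each fixed $t\in[0,1]$ and $x\in S$.

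First I would record well-definedness: by Assumption~\ref{assump:linparam}, $\dim V_{t,x}<\infty$ and $\mathbb E_{Z\sim p_{Z|t}(dz|x)}\|F_t^Z(x)\|_{V_{t,x}}<\infty$, so the expectation $\mathbb E_{Z\sim p_{Z|t}(dz|x)}[F_t^Z(x)]$ exists as a genuine element of $V_{t,x}$ (take it coordinatewise in any basis). To see that this barycenter lies in $\Omega_{t,x}$, I would use that a closed convex subset of a finite-dimensional inner product space equals the intersection of all the closed half-spaces $\{v : \langle v, a\rangle_{V_{t,x}} \le c\}$ containing it. For any such half-space we have $\langle F_t^z(x), a\rangle_{V_{t,x}} \le c$ for every $z$; taking expectations and using linearity of the inner product in its first slot gives $\langle F_t(x), a\rangle_{V_{t,x}} = \mathbb E_Z[\langle F_t^Z(x), a\rangle_{V_{t,x}}] \le c$. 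Hence $F_t(x)$ lies in every defining half-space, so $F_t(x)\in\Omega_{t,x}$.

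Next I would establish the parametrization identity. For fixed $f\in\mathcal T$ the vector $\mathcal K_{t,x}f\in V_{t,x}$ does not depend on $z$, and by Cauchy--Schwarz $|\langle \mathcal K_{t,x}f, F_t^Z(x)\rangle_{V_{t,x}}| \le \|\mathcal K_{t,x}f\|_{V_{t,x}}\,\|F_t^Z(x)\|_{V_{t,x}}$, whose expectation is finite by Assumption~\ref{assump:linparam}. This absolute integrability justifies interchanging the expectation with the (continuous, linear) inner product:
\begin{align*}
\mathcal L_t f(x) = \mathbb E_{Z\sim p_{Z|t}(dz|x)}[\mathcal L_t^Z f(x)] &= \mathbb E_{Z}\big[\langle \mathcal K_{t,x}f,\ F_t^Z(x)\rangle_{V_{t,x}}\big] \\
&= \big\langle \mathcal K_{t,x}f,\ \mathbb E_{Z}[F_t^Z(x)]\big\rangle_{V_{t,x}} = \langle \mathcal K_{t,x}f,\ F_t(x)\rangle_{V_{t,x}}.
\end{align*}
Combined with $F_t(x)\in\Omega_{t,x}$, this is exactly the claim that $F_t$ linearly parametrizes the marginal generator.

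The only step with real content is the convexity argument that the barycenter $F_t(x)$ stays in $\Omega_{t,x}$; everything else is bookkeeping enabled by the finite-dimensional integrability in Assumption~\ref{assump:linparam}. Even there, finite dimensionality reduces the claim to the routine supporting-half-space argument above, so I anticipate no serious obstacle. The one point requiring care is that all expectations are taken against the posterior $p_{Z|t}(dz|x)$ rather than the prior $p_Z$, since it is precisely this conditioning that makes both the membership claim and the interchange align with the marginal generator defined above.
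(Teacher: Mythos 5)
Your proposal is correct and follows essentially the same route as the paper: the interchange of the posterior expectation with the inner product is exactly the content of the paper's Lemma~\ref{lemma:linparamreg} (which the paper proves coordinatewise in an orthonormal basis, where you invoke Cauchy--Schwarz and linearity directly), and the conclusion is the same. The only cosmetic differences are that the paper runs the identity through the KFE pairing $\partial_t \langle p_t, f\rangle$ rather than pointwise in $x$, and justifies $F_t(x)\in\Omega_{t,x}$ with the one-line remark that it is a limit of convex combinations, whereas your supporting-half-space argument makes that membership step fully explicit.
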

\begin{proof}
    We have
    \begin{align*}
        \partial_t \langle p_t(dx), f(x)\rangle &= \mathbb E_{X_t\sim p_t(dx), Z\sim p_{Z|t}(dz|x)}[\mathcal L_t^Zf(X_t)] \\
        &= \mathbb E_{X_t\sim p_t(dx)}\mathbb E_{Z\sim p_{Z|t}(dz|x)}[\langle \mathcal K_{t,X_t}f, \ F_t^Z(X_t)\rangle_{V_{t,X_t}}] \\
        &= \mathbb E_{X_t\sim p_t(dx)} [\langle \mathcal K_{t,X_t}f , \ \mathbb E_{Z\sim p_{Z|t}(dz|x)}F_t^Z(X_t)\rangle_{V_{t,X_t}}] \\
        &= \left\langle p_t(dx), \Big\langle \mathcal K_{t,x}f , \ \underbrace{\mathbb E_{Z\sim p_{Z|t}(dz|x)}F_t^Z(x)}_{=:F_t(x)}\Big\rangle_{V_{t,x}}\right\rangle,
    \end{align*}
    where we have used Lemma~\ref{lemma:linparamreg} in the third equality, possible by Assumption~\ref{assump:linparam} in Section~\ref{sec:timedeplinparam}. This shows that
    \[\mathcal L_tf(x) := \langle \mathcal K_{t,x}f(x),\ F_t(x)\rangle_{V_{t,x}}\]
    solves the KFE for $p_t$. We also note that $F_t(x) \in \Omega_{t,x}$, since $\Omega_{t,x}$ is convex and closed and $F_t(x)$ is the limit of convex combinations of elements in $\Omega_{t,x}$.
\end{proof}
Next, we adapt Proposition 2 from GM to support a time and state varying Bregman divergence $D_{t,x}$ with a corresponding time and state varying linear parametrization.
\begin{theorem}\label{thm:prop2}
    Let $F_t^z: S\to \bigsqcup_{x\in S}\Omega_{t,x}$ be a time and state varying linear parametrization of the conditional generator as in Section~\ref{sec:timedeplinparam},  from which it holds that 
    \[F_t(x) = \mathbb E_{Z\sim p_{Z|t}(dz|x)}[F_t^Z(x)]\]
    linearly parametrizes the marginal generator, by Theorem~\ref{thm:prop1}. Let $D_{t,x}: \Omega_{t,x} \times \widehat{\Omega}_{t,x}\to \mathbb R$ be a Bregman divergence for each $t \in [0,1]$ and each $x\in S$. Then the CGM loss and the GM loss coincide up to a constant in $\theta$
    \[\nabla_\theta \mathbb E_{t\sim \mathcal D, X_t \sim p_t(dx), Z\sim p_{Z|t}(dz|x)}[w(t)D_{t,X_t}(F_t^Z(X_t), F_t^\theta(X_t))] = \nabla_\theta \mathbb E_{t\sim \mathcal D, X_t\sim p_t(dx)}[w(t)D_{t,X_t}(F_t(X_t), F_t^\theta(X_t))]\]
    where $\mathcal D \gg \lambda$ and $w(t) >0$ for $\lambda$-almost every $t\in [0,1]$, under the integrability conditions in Section~\ref{section:loss}.
\end{theorem}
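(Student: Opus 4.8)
The plan is to reduce both losses to a common $\theta$-dependent term plus a constant in $\theta$, using the defining ``mean-as-minimizer'' (generalized Pythagorean) identity for Bregman divergences. Writing $L_{\mathrm{cgm}}(\theta)$ via the tower property as an outer expectation over $t\sim\mathcal D$ and $X_t\sim p_t$ with an inner expectation over the posterior $Z\sim p_{Z|t}(dz\mid X_t)$, the central step is to fix $t$ and $x=X_t$ and expand the inner expectation. Using $D_{t,x}(a,b)=\phi_{t,x}(a)-\phi_{t,x}(b)-\langle a-b,\nabla\phi_{t,x}(b)\rangle_{V_{t,x}}$ with the second argument $b=F_t^\theta(x)$ held fixed over $Z$, and setting $\mu:=F_t(x)=\mathbb E_{Z}[F_t^Z(x)]$, I would establish the pointwise decomposition
\[\mathbb E_{Z}\big[D_{t,x}(F_t^Z(x), F_t^\theta(x))\big] = \mathbb E_{Z}\big[D_{t,x}(F_t^Z(x), \mu)\big] + D_{t,x}(\mu, F_t^\theta(x)).\]
The only surviving $Z$-dependence on the right is confined to the first term, which is free of $\theta$, while the second term is exactly the GM integrand. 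The cross term $\mathbb E_{Z}[\langle F_t^Z(x)-\mu,\nabla\phi_{t,x}(\mu)\rangle_{V_{t,x}}]$ vanishes because $\mu$ is the conditional mean and $\mathbb E_{Z}\|F_t^Z(x)\|_{V_{t,x}}<\infty$ by Assumption~\ref{assump:linparam}, so the fixed vector $\nabla\phi_{t,x}(\mu)$ can be pulled through the (finite-dimensional, hence Bochner) expectation.

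For this identity to be well-posed I would verify that every Bregman divergence appearing has its arguments in the correct domains: the first arguments $F_t^Z(x)$ and $\mu=F_t(x)$ lie in $\Omega_{t,x}$ (the latter by Theorem~\ref{thm:prop1}, as a limit of convex combinations in the closed convex set $\Omega_{t,x}$), while the second arguments $F_t^\theta(x)$ and $\mu$ lie in $\widehat\Omega_{t,x}$ by Assumption~\ref{assump:int3}. This last point is the subtle one in the type~(ii) case, where $\nabla\phi_{t,x}$ is defined only on $\mathrm{ri}(\Omega_{t,x})$: the identity requires $\nabla\phi_{t,x}(\mu)$ to exist, and this is precisely what Assumption~\ref{assump:int3} secures. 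I expect this domain bookkeeping --- ensuring the conditional mean does not escape to the boundary where the gradient is undefined --- to be the main obstacle, rather than the elementary algebra of the identity itself; nonnegativity of Bregman divergences also lets the split hold in $[0,\infty]$ even before integrability is invoked.

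Finally I would take the outer expectation over $X_t\sim p_t$ and $t\sim\mathcal D$, weighting by $w(t)$. Setting $C:=\mathbb E_{t\sim\mathcal D,\,X_t\sim p_t,\,Z\sim p_{Z|t}}[w(t)D_{t,X_t}(F_t^Z(X_t),F_t(X_t))]$, integrability condition~\ref{assump:int2} gives $C<\infty$ and condition~\ref{assump:int1} gives $L_{\mathrm{gm}}(\theta)<\infty$; since the integrands are nonnegative, the split of the integral is valid and yields $L_{\mathrm{cgm}}(\theta)=C+L_{\mathrm{gm}}(\theta)$ as an identity between finite real numbers for each $\theta$. Because $C$ carries no $\theta$-dependence, differentiating in $\theta$ gives $\nabla_\theta L_{\mathrm{cgm}}(\theta)=\nabla_\theta L_{\mathrm{gm}}(\theta)$ with no need to interchange gradient and expectation, which is the claim. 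Note that $w(t)$ and $\mathcal D$ pass through the decomposition identically on both sides, so the reweighting and choice of time distribution play no role beyond the integrability they must respect.
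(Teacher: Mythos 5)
Your proposal is correct and follows essentially the same route as the paper: the tower property followed by the Bregman mean-as-minimizer (generalized Pythagorean) decomposition, which the paper invokes as Corollary~\ref{cor:breg}(b) and you re-derive inline, with the residual posterior term constant in $\theta$. The only cosmetic difference is that the paper first absorbs $w(t)$ into a reweighted time distribution via Lemma~\ref{lemma:timeweight} before applying the decomposition, whereas you carry $w(t)$ and $\mathcal D$ through the identity directly; both are valid.
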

\begin{proof}
By the remarks in the beginning of the proof of Theorem~\ref{thm:timedistribution}, it suffices to consider when $w(t) \equiv 1$ and $\mathcal D \gg \lambda$, since the reweighting of the loss by $w(t)$ merely corresponds to an alternative time distribution $\widetilde {\mathcal D} \gg \lambda$ with a rescaled choice of Bregman divergence. Now, note that by the second integrability condition in Section~\ref{section:loss}, it follows that 
\[\mathbb E_{Z \sim p_{Z|t}(dz|X_t)}[D_{t,X_t}(F_t^Z(X_t), F_t(X_t))]< \infty \]
holds $p_t$-almost surely for $\mathcal D $-almost every $t\in [0,1]$. Therefore, we may use Corollary~\ref{cor:breg}(b) to obtain that
\[\mathbb E_{Z\sim p_{Z|t}(dz|X_t)}[D_{t,X_t}(F_t^Z(X_t), F_t^\theta(X_t))] = D_{t,X_t}(F_t(X_t), F_t^\theta(X_t))  + \mathbb E_{Z \sim p_{Z|t}(dz|X_t)} [D_{t,X_t}(F_t^Z(X_t), F_t(X_t))].\] 
holds $p_t$-almost surely for $\mathcal D$-almost every $t\in[0,1]$. Now, we can write
     \begin{align*}
    &\mathbb E_{t\sim \mathcal D, X_t \sim p_t(dx), Z\sim p_{Z|t}(dz|X_t)}[D_{t,X_t}(F_t^Z(X_t), F_t^\theta(X_t))]\\
    &\quad=  \mathbb E_{t\sim \mathcal D, X_t\sim p_t(dx)} \Big[\mathbb E_{Z\sim p_{Z|t}(dz|X_t)}[D_{t,X_t}(F_t^Z(X_t), F_t^\theta(X_t))]\Big]\\
     &\quad= \mathbb E_{t\sim \mathcal D, X_t\sim p_t(dx)} \Big[D_{t,X_t}(F_t(X_t), F_t^\theta(X_t))  + \mathbb E_{Z \sim p_{Z|t}(dz|x)} D_{t,X_t}(F_t^Z(X_t), F_t(X_t))\Big]\\
     &\quad= \mathbb E_{t\sim \mathcal D, X_t\sim p_t(dx)} [D_{t,X_t}(F_t(X_t), F_t^\theta(X_t))] + \underbrace{\mathbb E_{t\sim \mathcal D, X_t\sim p_t(dx),Z\sim p_{Z|t}(dz|X_t)}[D_{t,X_t}(F_t^Z(X_t),F_t(X_t))]}_{\mathrm{const.}}
    \end{align*}
    from which the result follows.
\end{proof}

\subsection{Sums of linear parametrizations}\label{sec:sumlinparam}
Fix a time $t$ and $x\in S$. Suppose that for all $f\in \mathcal T$ the infinitesimal generator $\mathcal L_t$ can be written as a sum of linear parametrizations 
\begin{align*}
    \mathcal L_tf(x) = \sum_{i=1}^{N_{t,x}} \langle \mathcal K_{t,x}^i f, F_t^i(x)\rangle_{V_{t,x}^i}
\end{align*}
for $N_{t,x} < \infty$ and a function $F_t^i(x) \in \Omega_{t,x}^i \subset V_{t,x}^i$, where $\Omega_{t,x}^i$ is convex and closed, and a linear operator $\mathcal K_{t,x}^i : \mathcal T \to V_{t,x}^i$, for $i=1,\ldots, N_{t,x}$.

In this section, we show that a valid per-term conditional generator matching loss is given by a loss whose per-term contribution is a sum of Bregman divergences, one along each component. This was mentioned in GM Proposition 5 in the context of multimodal generative models with factorized probability paths, but we would like to make this note more general.

We rewrite the sum of linear parametrizations as a single linear parametrization, and then we take our Bregman divergence to be separable. 

Define $V_{t,x}:= \bigoplus_{i=1}^{N_{t,x}} V_{t,x}^i$ and $\Omega_{t,x} := \prod_{i=1}^{N_{t,x}} \Omega_{t,x}^i$. Then $\Omega_{t,x} \subset V_{t,x}$ is convex and closed, since it is the product of convex and closed sets, and we set $F_t(x) := (F_t^i(x))_{i=1}^{N_{t,x}} \in \Omega_{t,x}$. Let $v = (v^1,\ldots, v^{N_{t,x}})$ and $w = (w^1,\ldots, w^{N_{t,x}})$ be elements of $V_{t,x}$. An inner product on $V_{t,x}$ is given by 
\[\langle v, w\rangle_{V_{t,x}} = \sum_{i=1}^{N_{t,x}} \langle v^i, w^i \rangle_{V_{t,x}^i},\]
and the map $\mathcal K_{t,x}f = (\mathcal K_{t,x}^1f,\ldots, \mathcal K_{t,x}^{N_{t,x}}f)$ is a linear operator from $\mathcal T$ to $V_{t,x}$. A linear parametrization of the generator $\mathcal L_t$ is therefore given by 
\[\mathcal L_tf(x) = \langle \mathcal K_{t,x}f, F_t(x) \rangle_{V_{t,x}}.\]
Assume the conditions in in Section~\ref{sec:timedeplinparam} and \ref{section:loss} hold for the overall linear parametrization. Consider the separable Bregman divergence $D_{t,x}: \Omega_{t,x}\times\widehat \Omega_{t,x} \to \mathbb R$ acting on each component by way of an individual Bregman divergence $D_{t,x}^i: \Omega_{t,x}^i \times \widehat{\Omega}_{t,x}^i\to \mathbb R$, so that
\[D_{t,x}(y,x) = \sum_{i=1}^{N_{t,x}} D_{t,x}^i(y^i,x^i).\]
Note that in the above, $\widehat \Omega_{t,x} = \prod_{i=1}^n \widehat\Omega_{t,x}^i$, and to be in full accordance with our notion of a Bregman divergence, we should have $\widehat\Omega_{t,x}^i = \Omega_{t,x}^i$ for $i=1\ldots, N_{t,x}$ or $\widehat \Omega_{t,x}^i = \mathrm{ri}(\Omega_{t,x}^i)$ for $i=1,\ldots, N_{t,x}$. See Example~\ref{ex:bregsep} and the succeeding remark for further details. A valid conditional generator matching loss is thus given by
\begin{align*}
L_\mathrm{cgm}(\theta) &= \mathbb E_{t\sim \mathcal D, Z\sim p_Z, X_t \sim p_t(dx|z)}[D_{t,X_t}( F_t^Z(X_t), F_t^\theta(X_t)) ]\\
&=\mathbb E_{t\sim \mathcal D, Z\sim p_Z, X_t \sim p_t(dx|z)}\left[\sum_{i=1}^{N_{t,X_t}} D_{t,X_t}^i(F_t^{Z,i}(X_t),F_t^{\theta,i}(X_t))\right].
\end{align*}

\subsection{Flow matching $X_1$ prediction}\label{section:flow}
 The \emph{endpoint prediction} formulation of flow matching losses provides a natural way to obtain a time-dependent linear parameterization of the drift, and is discussed in e.g.  \cite{zhang2025trajectoryflowmatchingapplications,Lipman2024FlowMG, Oresten2025.11.03.686219}, and others.

In flow matching, conditioned on terminating at the endpoint $x_1\in \mathbb R^n$, a time-dependent vector field $u_t(x|x_1)$ governs the drift along the conditional paths. Suppose that $u_t(x|x_1)$ is affine in $x_1$, i.e. \[u_t(x|x_1) = A_{t,x} x_1 + b_{t,x},\]
with $A_{t,x} \in \mathbb R^{n\times n}$ and $b_{t,x} \in \mathbb R^n$. We write the infinitesimal generator corresponding to this vector field as
\begin{align*} 
\mathcal L_t^{x_1}f(x) &= \nabla f(x)^T  (A_{t,x}x_1 + b_{t,x})\\
        &= \langle A_{t,x}^T\nabla f(x),  x_1\rangle_{\mathbb R^n} + \langle b_{t,x}^T\nabla f(x), 1\rangle_{\mathbb R}.
\end{align*}
The above is a sum of linear parametrizations as in Section~\ref{sec:sumlinparam}. The model parametrization can be written 
\begin{align*}
    \mathcal L_t^{\theta}f(x) &= \nabla f(x)^T  (A_{t,x}\hat x_1^\theta(t,x) + b_{t,x}) \\
        &= \langle A_{t,x}^T\nabla f(x),  \hat x_1^\theta(t,x)\rangle_{\mathbb R^n} + \langle b_{t,x}^T\nabla f(x), 1\rangle_{\mathbb R},
\end{align*}
with a corresponding CGM loss:
\[
L_\mathrm{cgm}(\theta) = \mathbb E_{t\sim \mathcal D, X_1\sim p_Z, X_t\sim p_t(\cdot|x_1)}[w(t)D_{t,X_t}(X_1,\hat x_1^\theta(t, X_t))]
\]
where $\hat{x}_1^\theta(t,X_t)$ denotes the endpoint predicted by the model, and $w(t) >0$ for $\lambda$-almost every $t\in[0,1]$. 

\begin{example}
    One could take
    \[u_t(x|x_1)= \frac{x_1 - x}{1-t}\]
    in the above.
\end{example} 
\begin{example}\label{ex:flow1}
    Let $u_t(x|x_1) = A_{t,x} x_1 + b_{t,x}$ be as above and parametrize $x_1$-predictions by $\hat x_1^\theta(t,X_t)$. Due to the linear parametrization 
    \[\mathcal L_t^{x_1}f(x) = \langle A_{t,x}^T\nabla f(x),  x_1\rangle_{\mathbb R^n} + \langle b_{t,x}^T\nabla f(x), 1\rangle_{\mathbb R},\]
    a valid CGM loss is given by
    \[
    L_{\mathrm{cgm}}(\theta) = \mathbb E_{t\sim \mathcal D, X_1 \sim p_Z, X_t\sim p_t(dx |x_1)} \left[w(t)\|X_1 - \hat X_1^\theta(t,X_t)\|^2\right],
    \]
    for some reweighting function $w(t) $ that is positive $\lambda$-almost everywhere on $[0,1]$, using the Bregman divergence $D(y,x) := \|y-x\|^2$ (cf. Example~\ref{ex:msebreg}).
\end{example}
\begin{example}\label{ex:flow2}
Fix $\varepsilon >0$, and let $c(t) = \dfrac{1}{(1-t+\varepsilon)^2}$. Then a valid CGM loss is given by 
    \begin{align*}
    L_{\mathrm{cgm}}(\theta) &= \mathbb E_{t\sim \mathcal D, X_1 \sim p_Z, X_t\sim p_t(dx |x_1)} \left[\frac{\|X_1 - \hat x_1^\theta(t,X_t)\|^2}{(1-t+\varepsilon)^2}\right]\\
    &=  \mathbb E_{t\sim \mathcal D, X_1 \sim p_Z, X_t\sim p_t(dx |x_1)} \left[\left\|\frac{X_1 - \hat x_1^\theta(t,X_t)}{1-t+\varepsilon} \right\|^2\right].
    \end{align*}
    This is in contrast with the usual conditional flow matching loss under $x_1$ prediction
    \[L_{\mathrm{cfm}}(\theta) = \mathbb E_{t\sim \mathcal D, X_1 \sim p_Z, X_t\sim p_t(dx |x_1)}\left[\left\|\frac{X_1 - \hat x_1^\theta(t,X_t)}{1-t}\right\|^2\right]\]
    whose per-term contribution has a singularity at $t=1$.
\end{example}

\subsection{Diffusion models and $x_0$-prediction}
As in GM Section H.2, to view denoising diffusion models from the perspective of generator matching, we consider a Markov noising process $\bar X_t$ given by $d\bar X_t = \sigma_t d\bar W_t$ from $t=1$ to $t=0$ backwards in time. Then the KFE holds in reverse time with
\[\partial_t \langle p_t,f\rangle = \langle p_t, \nabla f^T\frac{\sigma_t^2}{2} \nabla \log p_t\rangle\]
as it is shown in GM, corresponding to the probability flow ODE in \cite{song2021scorebasedgenerativemodelingstochastic}. We write and linearly parametrize the conditional generator as follows: 
\begin{align*} 
\mathcal L_t^{z} f(x) &= \nabla f(x)^\top \frac{\sigma_t^2}{2}\nabla_x \log p_t(x|z)\\
&= \langle \frac{\sigma_t^2}{2} \nabla f(x), \nabla_x \log p_t(x|z)\rangle_{\mathbb R^n}.
\end{align*}
With this parametrization, an analogue of Example~\ref{ex:flow2} thus holds for diffusion models:
\begin{example}
    Suppose that $u_t(x|x_0) := \nabla_x\log p_t(x|x_0) = A_{t,x} x_0 + b_{t,x}$ is affine in $x_0$. Then a valid CGM loss is given by 
    \[L(\theta) = \mathbb E_{t\sim \mathcal D, x_0 \sim p_\mathrm{data}, x_t \sim p_t(\cdot|x_0)}[w(t) \|x_0 - \hat x_0^\theta(t,x_t)\|^2]\]
    for a reweighting function $w(t)$ that is positive $\lambda$-almost everywhere on $[0,1]$. 
\end{example}

\subsection{Rescaling time-dependent constants out of jump models}

As in GM, we let $Q_t(dy;x)$ specify a time-dependent jump kernel on the state space $S$. It can be useful to model jump intensities in terms of continuous hazards $h_{t,j}(x) > 0$, together with individual rate multipliers $R_{t,j}(x) \geq 0 $ that are continuous, and to train a model to match the rate multiplier (cf. Equation 28 and Proposition 5 in \cite{gat2024discreteflowmatching}, where in their notation this is done by predicting the posterior $\hat w_t^j(x^i|X_t)$ and the hazards are $a_t^{i,j}$). 

Fix a time $t$ and $x\in S$. As in Branching Flows \citep{nordlinder2025branchingflowsdiscretecontinuous}, consider a time-dependent jump kernel of the form: 
\[Q_t(dy; x) = \sum_{j=1}^{N_{t,x}} h_{t,j}(x)R_{t,j}(x) \delta_{\Gamma_{t,j}(x)}(dy),\]
for continuous functions $\Gamma_{t,j}: S\to S$ that we call jump targets, and continuous rate multipliers $R_{t,j}(x)\geq 0$, for $j=1,\ldots, N_{t,x}$. We define
\[\lambda_\mathrm{total}(t,x) = \int_S Q_t(dy;x) = \sum_{j=1}^{N_{t,x}} h_{t,j}(x) R_{t,j}(x) \]
for each $x\in S$ and $t\in [0,1]$, and we require that $\lambda_\mathrm{total}(t,x) < \infty$. 
The associated process $X_t$ can be described by: 
\[X_{t+\Delta t} = \begin{cases}
    X_t &\quad\text{with probability} \quad 1 - \Delta t\lambda_\mathrm{total}(t,x) + o(\Delta t)\\
    \sim J_t(dy;x) &\quad\text{with probability} \quad \Delta t\lambda_{\mathrm{total}}(t,x) + o(\Delta t),
\end{cases}\]
where in the above, the distribution $J_t(dy;x)$ is the normalized jump distribution:
\[J_t(dy; x) = \frac{Q_t(dy;x) }{\lambda_{\mathrm{total}}(t,x)}= \frac{1}{\lambda_{\mathrm{total}}(t,x)}\sum_{j=1}^{N_{t,x}}  h_{t,j}(x)R_{t,j}(x) \delta_{\Gamma_{t,j}(x)}(dy)\]
having infinitesimal generator  
\[\mathcal L_tf(x) = \int(f(y) - f(x) ) Q_t(dy;x) = \sum_{j=1}^{N_{t,x}} (f(\Gamma_{t,j}(x)) - f(x)) h_{t,j}(x)R_{t,j}(x). \]
Write \[R_t(x) = (R_{t,j}(x))_{j=1}^{N_{t,x}}.\]A time and state dependent linear parametrization of the generator is given by 
\[\mathcal L_tf(x) = \langle \mathcal K_{t}f(x), R_{t}(x)\rangle_{V_{t,x}}, \]
where $V_{t,x} = \mathbb R^{N_{t,x}}$ is equipped with the inner product
\[\langle v,w \rangle_{V_{t,x}} = \sum_{j=1}^{N_{t,x}}h_{t,j}(x) v^jw^j,\]
and the linear map $\mathcal K_{t,x} : \mathcal T \to \mathbb R^{N_{t,x}}$ sends \[\mathcal K_{t,x}f = (f(\Gamma_{t,i}(x)) - f(x))_{i=1}^{N_{t,x}}.\]

\begin{example}
    In accordance with the above, for latent $z\in \mathcal Z$ we specify a conditional atomic time-dependent jump kernel through conditional rate multipliers $R_{t,j}^z(x)$:
    \[Q_t^z(dy; x) = \sum_{j=1}^{N_{t,x}}  h_{t,j}(x)R_{t,j}^z(x) \delta_{\Gamma_{t,j}(x)}(dy),\]
    making the same assumptions on $h_{t,j}(x)$, $R_{t,j}^z$, $\lambda_\mathrm{total}^z(t,x) := \sum_{j=1}^{N_{t,x}} h_{t,j}(x) R_{t,j}^z(x)$ and on the marginal rates $R_{t,j}(x)$ and $\lambda_\mathrm{total}(t,x)$ as above. In particular, we have 
    \[\lambda_\mathrm{total}(x) = \mathbb E_{Z\sim p_{Z|t}(dz|x)} [\lambda_\mathrm{total}^z(t,x)] < \infty.\]
    Using the elementary inequality $\sum_i a_i^2 \leq (\sum_i a_i)^2$ for $a_i \geq 0$, we can show that $\mathbb E_{Z\sim p_{Z|t}(dz|x)}\|R_{t}^Z(x)\|_{V_{t,x}} < \infty$, since  
    \begin{align*}
    \|R_t^z(x) \|_{V_{t,x}} = \left(\sum_{j=1}^{N_{t,x}} h_{t,j}(x) R_{t,j}^z(x)^2\right)^{1/2} &\leq \sum_{j=1}^{N_{t,x}} \sqrt{h_{t,j}(x)} R_{t,j}^z(x) \\ 
    &\leq C_{t,x} \lambda_{\mathrm{total}}^z(t,x)
    \end{align*}
    where $C_{t,x} = \max_{j}\{ I_{h_{t,j}(x) > 0} \cdot h_{t,j}(x)^{-1/2}\}$ is independent of $z$, so it follows that 
    \[\mathbb E_{Z\sim p_{Z|t}(dz|x)}\|R_{t}^Z(x)\|_{V_{t,x}} \leq C_{t,x} \mathbb E_{Z\sim p_{Z|t}(dz|x)}[\lambda_\mathrm{total}^z(x)] < \infty.\]
    This satisfies the condition for Lemma~\ref{lemma:linparamreg}, so a valid CGM loss under the conditions in Section~\ref{section:loss} is given by 
    \[
    L_\mathrm{cgm}(\theta) = \mathbb E_{t\sim \mathcal D, Z\sim p_Z(dz), X_t\sim p_t(dx|z)}[D_{t,X_t}(R_t^Z(X_t), R_t^\theta(X_t))].
    \]
    In particular, note that the hazard rates $h_{t,j}(x)$ are not present in the loss above, so the loss can be taken directly against $X_1$-predictions (e.g., in DFM, the posterior $\hat w_t^j(x^i|X_t)$).
\end{example}

\begin{remark}
    In accordance with Assumption~\ref{assump:gm2} in Appendix~\ref{app:gm}, the expected number of jumps should be finite. This corresponds to
\[\mathbb E\left[\int_0^1\lambda_\mathrm{total}(t,X_t)dt\right]< \infty\]
in the above.
\end{remark}
\begin{example}
    As used in Branching Flows \citep{nordlinder2025branchingflowsdiscretecontinuous}, suppose the model predicted rate multipliers are $\rho_{t,i}^\theta \in (0,1)$ and the ground truth conditional rate multipliers are $\rho_{t,i}^z \in [0,1]$. A valid CGM loss is
    \begin{align*}
    L_{\mathrm{cgm}}(\theta) &= \textstyle \mathbb E_{t\sim \mathcal D, Z\sim p_Z, X_t\sim p_t(dx| z)}[ \sum_{i=1}^{N_{t,x}}D_{\mathrm{BCE}}(\rho_{t,i}^Z(X_t), \rho_{t,i}^\theta(X_t))]\\
    &=  \textstyle \mathbb E_{t\sim \mathcal D, Z\sim p_Z, X_t\sim p_t(dx| z)}\left[ -\sum_{i=1}^{N_{t,x}}\big(\rho_{t,i}^Z(X_t)\log( \rho_{t,i}^\theta(X_t)) + (1-\rho_{t,i}^Z(X_t))\log(1-\rho_{t,i}^\theta(X_t))\big) + \mathrm{const}.\right]
    \end{align*}
    via the separable Bregman divergence which takes the binary cross entropy loss on each component, considering $0\cdot \log 0 := 0$ in the above (cf. Example~\ref{ex:bregbce}). 
\end{example}
\begin{example}
    Alternatively, also as used in Branching Flows, we may consider the Poisson-style Bregman divergence (cf. Example~\ref{ex:bregpoiss}) between predicted rate multipliers $R_{t,i}^\theta(x) \in (0,\infty)$ and conditional, ground-truth rate multipliers $R_{t,i}^z(x) \in [0,\infty)$,
    \begin{align*}
    L_{\mathrm{cgm}}(\theta) &= \mathbb E_{t \sim \mathcal D, Z\sim p_Z, X_t\sim p_t(dx| z)}\left[\textstyle\sum_{i=1}^{N_{t,x}} D_{\mathrm{Poiss}}(R_{t,i}^Z(X_t), R_{t,i}^\theta(X_t))\right] \\
    &=\mathbb E_{t \sim \mathcal D, Z\sim p_Z, X_t\sim p_t(dx| z)}\left[\textstyle\sum_{i=1}^{N_{t,x}}\big(R_{t,i}^\theta(X_t) - R_{t,i}^Z(X_t) \log(R_{t,i}^\theta(X_t))\big) + \mathrm{const.}\right].
    \end{align*}
\end{example}

\subsection{Edit Flows Propositions}

In the notation of Edit Flows (EF) \citep{editflowscite}, we let $u_t(x,z|x_t,z_t)$ generate $p_t(x,z)$ on $\mathcal  X\times \mathcal  Z.$ By the first part of EF Theorem 3.1, 
\[u_t(x|x_t) := \sum_z \mathbb E_{p_t(z_t|x_t)}u_t(x,z|x_t,z_t)\qquad\text{generates} \qquad p_t(x) := \sum_z p_t(x,z).\]

We show an extended second part of EF Theorem 3.1:
\begin{theorem}\label{thm:EF31}
    For each $t\in [0,1]$ and $x\in \mathcal X$, it holds that 
    \begin{align*} 
    \textstyle \nabla_\theta \mathbb E_{t \sim \mathcal D,x_t, z_t\sim p_t(x,z)} &[w(t) \textstyle D_{t,x_t}(\sum_z a(t)u_t(\cdot,z| x_t,z_t), b(t)u_t^\theta(\cdot|x_t))] \\
    &= \nabla_\theta \mathbb E_{t \sim\mathcal D,x_t \sim p_t}[w(t)D_{t,x_t}(a(t)u_t(\cdot| x_t), b(t)u_t^\theta(\cdot|x_t))],
    \end{align*}
    where $a(t), b(t) >0$ are internally reweighting the loss and $w(t) >0$ is externally reweighting the loss, under the integrability assumptions: 
    \begin{enumerate}
        \item[(a)] $\mathbb E_{t\sim \mathcal D, x_t\sim p_t}[w(t)D_{t,x_t} (a(t) u_t(\cdot|x_t), b(t)u_t^\theta(\cdot|x_t))] < \infty $,
        \item[(b)] $\mathbb E_{t\sim \mathcal D, x_t\sim p_t,z_t\sim p_t(z_t|x_t)} [w(t)D_{t,x_t}(\sum_z a(t)u_t(\cdot,z|x_t,z_t), b(t)u_t(\cdot|x_t))] < \infty.$
    \end{enumerate}
\end{theorem}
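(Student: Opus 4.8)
The plan is to follow the template of the proof of Theorem~\ref{thm:prop2}, replacing the GM conditioning variable $Z$ with the Edit Flows latent $z_t$ and using the first part of EF Theorem~3.1 in place of Theorem~\ref{thm:prop1}. The whole argument rests on two ingredients: the conditional-mean identity supplied by EF, namely that $u_t(\cdot|x_t) = \sum_z \mathbb E_{p_t(z_t|x_t)} u_t(\cdot,z|x_t,z_t)$, and the Bregman three-point decomposition of Corollary~\ref{cor:breg}(b). Since $a(t), b(t), w(t)$ are all positive, and since the residual term produced by the decomposition will turn out to be independent of $\theta$, I will carry $w(t)$ through unchanged rather than first normalising it away as in Lemma~\ref{lemma:timeweight}; the external weighting only rescales each term by a fixed nonnegative factor.

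First I would fix $t \in [0,1]$ and $x_t \in \mathcal X$ and set $\bar u := a(t) u_t(\cdot|x_t)$. Using that summation over $z$ and conditional expectation over $z_t \sim p_t(z_t|x_t)$ are linear, the scalar $a(t)$ factors out of both, so the first part of EF Theorem~3.1 gives
\[
  \mathbb E_{z_t \sim p_t(z_t|x_t)}\Big[\textstyle\sum_z a(t) u_t(\cdot,z|x_t,z_t)\Big] = a(t)\, u_t(\cdot|x_t) = \bar u .
\]
In other words, $\bar u$ is exactly the $z_t$-conditional mean of the inner argument $\sum_z a(t) u_t(\cdot,z|x_t,z_t)$, which is precisely the structure the Bregman identity needs. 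The interchange of the sum, the expectation, and the underlying linear structure is the analogue of Lemma~\ref{lemma:linparamreg}; because Edit Flows lies outside GM I cannot cite that lemma directly, but the same linearity-plus-integrability reasoning applies once assumption~(b) guarantees the relevant quantities are finite.

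Next I would apply Corollary~\ref{cor:breg}(b) pointwise, with random variable $z_t$, random point $\sum_z a(t) u_t(\cdot,z|x_t,z_t)$, conditional mean $\bar u$, and fixed second argument $b(t) u_t^\theta(\cdot|x_t)$, to obtain
\[
  \mathbb E_{z_t}\Big[D_{t,x_t}\big(\textstyle\sum_z a(t) u_t(\cdot,z|x_t,z_t),\, b(t) u_t^\theta(\cdot|x_t)\big)\Big]
  = D_{t,x_t}\big(\bar u,\, b(t) u_t^\theta(\cdot|x_t)\big)
    + \mathbb E_{z_t}\Big[D_{t,x_t}\big(\textstyle\sum_z a(t) u_t(\cdot,z|x_t,z_t),\, \bar u\big)\Big],
\]
valid $p_t$-a.s. for $\mathcal D$-a.e. $t$ thanks to assumption~(b), which rules out the $\infty-\infty$ ambiguity and makes the split legitimate. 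Taking $\mathbb E_{t\sim\mathcal D,\, x_t\sim p_t}[w(t)\,\cdot\,]$ of both sides, justified by the tower property together with assumptions~(a) and~(b), expresses the left-hand loss as the right-hand (marginal) loss plus the term $\mathbb E_{t,x_t,z_t}[w(t) D_{t,x_t}(\sum_z a(t) u_t(\cdot,z|x_t,z_t), \bar u)]$. This term involves only $u_t$, not $u_t^\theta$, hence is constant in $\theta$ and finite by~(b); applying $\nabla_\theta$ annihilates it and yields the claimed equality of gradients.

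The step I expect to require the most care is the justification of the pointwise Bregman decomposition: I must check that the conditional mean $\bar u = a(t) u_t(\cdot|x_t)$ and the model output $b(t) u_t^\theta(\cdot|x_t)$ lie in the differentiability domain $\widehat\Omega_{t,x_t}$ (the Edit Flows analogue of integrability condition~3 of Section~\ref{section:loss}), and that the residual expectation is finite, so that the additive split of nonnegative Bregman terms is unambiguous. Everything else is the bookkeeping of interchanging the sum over $z$, the expectation over $z_t$, and the outer integral over $(t,x_t)$, which the hypotheses~(a) and~(b) are designed to license.
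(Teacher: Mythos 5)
Your proposal is correct and follows essentially the same route as the paper's own proof: establish the conditional-mean identity $\mathbb E_{z_t\sim p_t(z_t|x_t)}[\sum_z a(t)u_t(\cdot,z|x_t,z_t)] = a(t)u_t(\cdot|x_t)$ from the first part of EF Theorem~3.1, apply the Bregman decomposition of Corollary~\ref{cor:breg}(b) under the integrability assumptions, and observe that the residual term is constant in $\theta$ so the gradients agree. If anything, your explicit identification of the residual term's second argument as $a(t)u_t(\cdot|x_t)$ (the true conditional mean) and your attention to the $\widehat\Omega_{t,x_t}$ membership condition are slightly more careful than the paper's writeup, which at one point writes $b(t)u_t(\cdot|x_t)$ in that slot.
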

\begin{proof}
    Note that
    \[\mathbb E_{z_t \sim p_t(z_t|x_t)}\left[\sum_z a(t)u_t(\cdot,z| x_t,z_t)\right] = a(t)u_t(\cdot| x_t).\]
    The proof then runs analogously to that of Theorem~\ref{thm:prop2}. Using Corollary~\ref{cor:breg}(b) in the second equality, and the integrability assumptions (a) and (b) to ensure well-posedness, we have
    \begin{align*} 
    \textstyle \mathbb E_{t \sim \mathcal D,x_t, z_t\sim p_t(x,z)} [&w(t)\textstyle D_{t,x_t}(\sum_z a(t)u_t(\cdot,z| x_t,z_t), b(t)u_t^\theta(\cdot|x_t))] \\
    &= \textstyle\mathbb E_{t\sim \mathcal D, x_t \sim p_t} \left[w(t)\mathbb E_{z_t \sim p_{t}(z_t|x_t)}[D_{t,x_t}(\sum_z a(t) u_t(\cdot,z| x_t,z_t), b(t) u_t^\theta(\cdot|x_t))]\right]\\
    &=\textstyle \mathbb E_{t\sim \mathcal D, x_t \sim p_t}\Big[w(t)D_{t,x_t}(a(t)u_t(\cdot| x_t), b(t)u_t^\theta(\cdot|x_t))  \\
    &\textstyle \qquad\qquad \qquad  + \mathbb E_{z_t\sim p_t(z_t|x_t)}\left[w(t)D_{t,x_t}(\sum_z a(t) u_t(\cdot,z| x_t,z_t), b(t)u_t(\cdot|x_t))\right]\Big] \\
    &= \mathbb E_{t\sim \mathcal D, x_t \sim p_t}[w(t)D_{t,x_t}(a(t)u_t(\cdot| x_t), b(t)u_t^\theta(\cdot|x_t))] \\ 
    &\textstyle \qquad +\underbrace{\textstyle\mathbb E_{t\sim \mathcal D, x_t\sim p_t, z_t\sim p_t(z_t|x_t)}[w(t)D_{t,x_t}(\sum_z a(t) u_t(\cdot,z| x_t,z_t), a(t)u_t(\cdot|x_t))]}_{\mathrm{const.}}
    \end{align*}
    from which the result follows.
\end{proof}

Note that Theorem~\ref{thm:EF31} recovers the original statement of the second part of EF Theorem 3.1 when for each $t$ and $x_t$, we define $w(t) \equiv a(t) \equiv  b(t) \equiv 1$, $D_{t,x_t} \equiv D$ and $\mathcal D = \delta_{t}$ for some choice of Bregman divergence $D$. We also remark that scaling both $a(t)$ and $b(t)$ by the same factor has the same effect as scaling the loss by $w(t)$.
    \begin{example}\label{ex:4.9}

Let $h(t)$ be some overall hazard rate, e.g. \[h(t) = \frac{\dot\kappa_t}{1-\kappa_t}\] for a scheduler $\kappa_t$. Recasting the per-time contribution to the loss in terms of a rescaled prediction $ v_t^\theta(x|x_t) $ of $\frac{u_t(x|x_t)}{h(t)}$, we have:
\[\nabla_\theta \mathbb E_{x_t, z_t\sim p_t(x,z)} [\textstyle D(\tfrac{1}{h(t) } \sum_z u_t(\cdot,z|x_t,z_t), v_t^\theta(\cdot|x_t))] = \nabla_\theta \mathbb E_{x_t\sim p_t}D(\tfrac{1}{h(t) }u_t(\cdot|x_t) ,  v_t^\theta(\cdot|x_t))\]
for any Bregman divergence $D$. The loss minimum is found at
\[v_t^{\theta^*}(\cdot|x_t) := \frac{1}{h(t)} u_t(\cdot|x_t),\]
implying that $v_t^\theta(\cdot|x_t)$ can be used to parametrize edit rates via
\[u_t^\theta(\cdot|x_t) := h(t)v_t^\theta(\cdot |x_t),\]
whereby we recover the marginal  rates at the loss minimum:
\[u_t^{\theta^*}(\cdot|x_t) = h(t) v_t^{\theta^*}(\cdot|x_t) = u_t(\cdot|x_t).\]
\end{example}

\appendix
\section{Properties of Bregman Divergences}
We state and prove a more general analogue of the statement of Proposition 1 in \cite{JMLR:v6:banerjee05b}.
\begin{lemma}\label{lemma:bregnori}
    Let $(V,\langle\cdot,\cdot\rangle)$ be a finite dimensional inner product space, let $\Omega \subset V$ be convex and closed, and let $\phi: \Omega \to \mathbb R $ be a strictly convex differentiable function. Define the Bregman divergence $D_\phi: \Omega\times \Omega \to \mathbb R$ by
    \[D_\phi(a,b) = \phi(a) - \phi(b) - \langle a-b, \nabla \phi(b)\rangle . \] 
    Let $(\Xi, \mathcal F, \mu)$ be a probability space and let $X: \Xi \to \Omega \subset V$ be a random vector such that 
    \begin{enumerate}
        \item $\mathbb E_\mu \|X\| < \infty$,
        \item $\mathbb E_\mu[D_\phi(X, \mathbb E_\mu [X])] < \infty$.
    \end{enumerate} 
    Then it holds that
    \begin{enumerate} 
    \item[(a)] $\mathbb E_{\mu}[X]$ is the unique minimizer of $y\mapsto \mathbb E_{\mu}[D_\phi(X,y) ]$ over $\Omega$.
    \item[(b)] $\mathbb E_{\mu}[D_\phi(X,y)] = D_\phi(\mathbb E_\mu[X], y)+\mathbb E_\mu [D_\phi(X,\mathbb E_\mu[X])] $, for all $y\in \Omega$.
    \item[(c)] $D_\phi(x,y) = 0 \iff x=y$ and $D_\phi(x,y) > 0$ if $x\neq y$, over $x,y \in \Omega$. 
    \end{enumerate}
\end{lemma}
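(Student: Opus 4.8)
The plan is to prove (c) first as a self-contained fact, then establish the decomposition identity (b), and finally read off the minimization statement (a) as an immediate corollary of (b) and (c). Before any of this I would record the preliminary observation that $\bar X := \mathbb E_\mu[X]$ is well-defined (by integrability condition 1 together with $\dim V < \infty$) and, crucially, that $\bar X \in \Omega$, so that $\nabla\phi(\bar X)$ and the divergences $D_\phi(\cdot,\bar X)$, $D_\phi(\bar X,\cdot)$ all make sense. To see $\bar X \in \Omega$: since $\Omega$ is closed and convex it is the intersection of the closed half-spaces containing it, and for each such half-space $\{v : \langle v, n\rangle \le c\}$ we have $\langle X, n\rangle \le c$ almost surely, hence $\langle \bar X, n\rangle = \mathbb E_\mu\langle X, n\rangle \le c$; thus $\bar X$ lies in every such half-space and therefore in $\Omega$.

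For part (c) I would invoke the first-order characterization of strict convexity. For differentiable strictly convex $\phi$ and $x \neq y$, restricting to the segment $t \mapsto \phi(y + t(x-y))$ on $[0,1]$ and using that its derivative is strictly increasing yields $\phi(x) > \phi(y) + \langle \nabla\phi(y), x-y\rangle$, i.e. $D_\phi(x,y) > 0$; and $D_\phi(x,x)=0$ follows by direct substitution. This gives nonnegativity with equality exactly on the diagonal, and in particular shows $\mathbb E_\mu[D_\phi(X,y)]$ is a well-defined element of $[0,\infty]$ for every $y \in \Omega$.

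For part (b) the key computational observation is that subtracting $D_\phi(X,\bar X)$ from $D_\phi(X,y)$ cancels the (possibly non-integrable) term $\phi(X)$, leaving an expression affine in $X$, namely $D_\phi(X,y) - D_\phi(X,\bar X) = [\phi(\bar X) - \phi(y)] - \langle X - y, \nabla\phi(y)\rangle + \langle X - \bar X, \nabla\phi(\bar X)\rangle$. Taking $\mathbb E_\mu$ is then legitimate by condition 1, since the surviving terms are linear in $X$ with finite expectation; and using $\mathbb E_\mu[X] = \bar X$ collapses the final inner product to zero, producing $\mathbb E_\mu[D_\phi(X,y)] - \mathbb E_\mu[D_\phi(X,\bar X)] = D_\phi(\bar X, y)$, which is exactly the claimed identity. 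Condition 2 guarantees $\mathbb E_\mu[D_\phi(X,\bar X)] < \infty$, so every quantity in sight is finite.

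Finally, part (a) drops out of (b): we have $\mathbb E_\mu[D_\phi(X,y)] = \mathbb E_\mu[D_\phi(X,\bar X)] + D_\phi(\bar X, y)$, where the first summand is constant in $y$ and, by (c), the second is $\ge 0$ with equality iff $y = \bar X$; hence $\bar X$ is the unique minimizer over $\Omega$. I expect the main obstacle to be not the algebra but the integrability bookkeeping: specifically, justifying that $\bar X \in \Omega$ so that the gradient and divergence at $\bar X$ are defined, and verifying that the cancellation of $\phi(X)$ genuinely lets us interchange expectation with the inner product under conditions 1 and 2 alone, without ever assuming $\mathbb E_\mu[\phi(X)] < \infty$.
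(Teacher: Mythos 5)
Your proof is correct and follows essentially the same route as the paper's: part (b) via an algebraic identity that cancels the possibly non-integrable $\phi(X)$ term and then uses $\mathbb E_\mu[X-\mathbb E_\mu[X]]=0$, part (c) from the first-order characterization of strict convexity, and part (a) as an immediate consequence of (b) and (c). The only cosmetic differences are that you justify $\mathbb E_\mu[X]\in\Omega$ via the intersection-of-closed-half-spaces characterization (the paper appeals to $\mathbb E_\mu[X]$ being a limit of convex combinations in the closed convex set $\Omega$) and that you are somewhat more explicit about the integrability bookkeeping.
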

\begin{proof}
        The condition in (c) is the first order characterization of strict convexity of $\phi$, as seen in Equation (3.3) in \cite{citeulike:163662}. For (b), first note that $\mathbb E_\mu[X] \in \Omega$ since $\Omega$ is convex and closed and $\mathbb E_\mu[X]$ is the limit of convex combinations in $\Omega$. A simple computation gives
    \begin{align*}
        D_\phi(x,y) - D_\phi(z,y) &=  \phi(x) - \phi(z) - \langle x-z, \nabla \phi(y)\rangle\\
        &= D_\phi(x,z) - \langle x-z, \nabla\phi(y) - \nabla\phi(z)\rangle.
    \end{align*}
    It follows that
    \begin{align*}
        \mathbb E_\mu \big[D_\phi(X,y) - D_\phi(\mathbb E_\mu[X],y)\big] 
        &=\mathbb E_\mu\big[D_\phi(X,\mathbb E_\mu[X])) - \langle X- \mathbb E_\mu[X], \ \nabla\phi(\mathbb E_\mu[X]) - \nabla \phi(y)\rangle\big]\\
        &= \mathbb E_\mu\big[D_\phi(X,\mathbb E_\mu[X])\big] - \mathbb E_\mu\big[\langle X-\mathbb E_\mu[X], \ \nabla \phi(\mathbb E_\mu[X]) - \nabla \phi(y)\rangle\big]\\
        &= \mathbb E_\mu\big[D_\phi(X,\mathbb E_\mu[X])\big] - \langle \mathbb E_\mu[X- \mathbb E_\mu [X]], \ \nabla \phi(\mathbb E_\mu[X]) - \nabla\phi(y)\rangle \\
        &= \mathbb E_\mu [D_\phi(X,\mathbb E_\mu[X])] \\
    \end{align*}
    Noting that $D_\phi(\mathbb E_\mu[X],y)< \infty$ and $\mathbb E_\mu [D_\phi(X,\mathbb E_\mu[X])]< \infty$, we can conclude (b). Finally, (a) follows from (c) and (b).
\end{proof}
\begin{remark}\label{remark:bregex}
    Following \cite{Lee01111986}, the relative interior of $\Omega$, denoted $\mathrm{ri}(\Omega)$, is the interior of $\Omega$ taken within its affine hull. The affine hull of $\Omega$, denoted $\mathrm{aff}(\Omega)$, is the set of affine combinations of elements of $\Omega$, and an affine combination of $x_1,\ldots, x_n \in \Omega$ is given by $\sum_{i=1}^n \alpha_i x_i,$ where $\alpha_1,\ldots, \alpha_n \in \mathbb R$ are such that $\sum_{i=1}^n \alpha_i = 1$. Note that in contrast to convex combinations, the terms $\alpha_i $ can be negative. For example, if $\Omega$ contains two distinct points, then $\mathrm{aff}(\Omega)$ must contain the line through them.
\end{remark}
\begin{lemma}\label{lemma:bregri}
    Let $\Omega \subset V$ be a closed convex subset of $(V,\langle\cdot,\cdot\rangle)$, and let $\phi: \Omega \to \mathbb R$ be a continuous strictly convex function that is differentiable in $\mathrm{ri}(\Omega)$. Define the Bregman divergence $D_\phi: \Omega\times \mathrm{ri}(\Omega) \to \mathbb R$ by
    \[D_\phi(a,b) = \phi(a) - \phi(b) - \langle a-b, \nabla\phi(b)\rangle.\]
    Let $(\Xi, \mathcal F, \mu)$ be a probability space and let $X: \Xi \to \Omega \subset V$ be a random vector such that $\mathbb E_\mu[X]\in \mathrm{ri}(\Omega)$. Then under the same integrability conditions as Lemma~\ref{lemma:bregnori}, it holds
    \begin{enumerate}
        \item[(a)] $\mathbb E_\mu[X]$ is the unique minimizer of $y\mapsto \mathbb E_\mu [D_\phi(X,y)]$ over $\mathrm{ri}(\Omega)$.
        \item[(b)] $\mathbb E_{\mu}[D_\phi(X,y)] = D_\phi(\mathbb E_\mu[X], y)+\mathbb E_\mu [D_\phi(X,\mathbb E_\mu[X])] $, for all $y\in \mathrm{ri}(\Omega)$.
        \item[(c)] $D_\phi(x,y) = 0 \iff x=y$ and $D_\phi(x,y) > 0$ if $x\neq y$, over $x\in \Omega$ and $y\in \mathrm{ri}(\Omega)$.
    \end{enumerate}
\end{lemma}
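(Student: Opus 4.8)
The plan is to mirror the proof of Lemma~\ref{lemma:bregnori} step by step, isolating the one place where the weakened hypothesis --- differentiability of $\phi$ only on $\mathrm{ri}(\Omega)$ --- forces new work, namely part (c) when the first argument lies on the relative boundary. The structural reason the rest carries over unchanged is that \emph{every} gradient appearing in the argument is evaluated at a relative-interior point: in (b) the algebraic identity differentiates $\phi$ only at $y$ and at $\mathbb E_\mu[X]$, and both lie in $\mathrm{ri}(\Omega)$ --- the former because $y$ ranges over $\mathrm{ri}(\Omega)$, the latter by the standing assumption $\mathbb E_\mu[X]\in\mathrm{ri}(\Omega)$. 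Thus $\nabla\phi(y)$ and $\nabla\phi(\mathbb E_\mu[X])$ are well defined and the three divergences $D_\phi(X,y)$, $D_\phi(X,\mathbb E_\mu[X])$, $D_\phi(\mathbb E_\mu[X],y)$ are all legitimate maps into $\mathbb R$ on $\Omega\times\mathrm{ri}(\Omega)$.

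For part (c), fix $a\in\Omega$ and $b\in\mathrm{ri}(\Omega)$ with $a\neq b$, and restrict $\phi$ to the segment through them by setting $g(s):=\phi\big(b+s(a-b)\big)$ for $s\in[0,1]$. Because $b\in\mathrm{ri}(\Omega)$, the points $b+s(a-b)$ stay in $\mathrm{ri}(\Omega)$ for all $s$ in a two-sided neighborhood of $0$, so $g$ is differentiable at $0$ with $g'(0)=\langle a-b,\nabla\phi(b)\rangle$ by the chain rule; moreover $g$ is strictly convex on $[0,1]$ (and continuous up to $s=1$, where $g(1)=\phi(a)$) since $\phi$ is strictly convex and $a\neq b$. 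Then $D_\phi(a,b)=g(1)-g(0)-g'(0)$, and I would obtain strict positivity from the strict first-order inequality for the one-variable strictly convex function $g$: monotonicity of difference quotients gives $g'(0)\le \tfrac{g(s_0)-g(0)}{s_0}$ for any $s_0\in(0,1)$, while strict convexity gives $\tfrac{g(s_0)-g(0)}{s_0}<g(1)-g(0)$, so $g'(0)<g(1)-g(0)$ and hence $D_\phi(a,b)>0$. Together with $D_\phi(b,b)=0$ this is exactly (c). Equivalently, one can first establish the non-strict supporting inequality on all of $\Omega\times\mathrm{ri}(\Omega)$ and then upgrade to strictness via a midpoint $c=\tfrac12(a+b)\in\mathrm{ri}(\Omega)$, using the half-open segment property of relative interiors from \cite{Lee01111986}.

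With (c) in hand, parts (b) and (a) transcribe almost verbatim from Lemma~\ref{lemma:bregnori}. For (b) I would reuse the identity $D_\phi(x,y)-D_\phi(z,y)=D_\phi(x,z)-\langle x-z,\nabla\phi(y)-\nabla\phi(z)\rangle$ with $z=\mathbb E_\mu[X]\in\mathrm{ri}(\Omega)$, take $\mu$-expectations, and observe that the cross term vanishes because $\mathbb E_\mu[X-\mathbb E_\mu[X]]=0$; the two integrability hypotheses inherited from Lemma~\ref{lemma:bregnori} make every expectation finite and the rearrangement valid. Part (a) is then immediate: by (b) the objective $y\mapsto\mathbb E_\mu[D_\phi(X,y)]$ equals $D_\phi(\mathbb E_\mu[X],y)$ plus a constant independent of $y$, and by (c) this is nonnegative and vanishes over $\mathrm{ri}(\Omega)$ exactly at the feasible point $y=\mathbb E_\mu[X]$, giving the unique minimizer. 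I expect the main obstacle to be precisely the strict inequality in (c) at relative-boundary points: $\phi$ need not be differentiable there, so one cannot differentiate at $a$ directly, and a naive passage to the limit along the segment degrades the strict inequality to a non-strict one --- the segment-restriction (or midpoint) argument is exactly what preserves strictness.
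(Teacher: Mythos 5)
Your proof is correct, and for parts (a) and (b) it follows exactly the paper's route: the algebraic identity $D_\phi(x,y)-D_\phi(z,y)=D_\phi(x,z)-\langle x-z,\nabla\phi(y)-\nabla\phi(z)\rangle$ with $z=\mathbb E_\mu[X]$, the observation that every gradient is evaluated at a point of $\mathrm{ri}(\Omega)$, and then (a) from (b) and (c). The one place you diverge is part (c): the paper simply cites Property~1 in Appendix~A of \cite{JMLR:v6:banerjee05b}, whereas you prove it from scratch by restricting $\phi$ to the segment $s\mapsto b+s(a-b)$, using that $b\in\mathrm{ri}(\Omega)$ makes $g$ differentiable at $0$ with $g'(0)=\langle a-b,\nabla\phi(b)\rangle$, and combining monotonicity of difference quotients with strict convexity to get $g'(0)\le \frac{g(s_0)-g(0)}{s_0}<g(1)-g(0)$, i.e.\ $D_\phi(a,b)>0$. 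This one-dimensional reduction is sound (the segment stays in $\mathrm{aff}(\Omega)$, so relative interiority of $b$ does give a two-sided neighborhood, and in any case only the right derivative is needed), it correctly isolates the only genuine difficulty --- strictness when $a$ lies on the relative boundary where $\phi$ need not be differentiable --- and it makes the lemma self-contained where the paper defers to an external reference. The trade-off is purely expository: the paper's citation is shorter, while your argument costs half a page but removes the dependence on the cited property.
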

\begin{proof}
    (c) is equivalent to  Property 1 in Appendix A of \cite{JMLR:v6:banerjee05b}. (b) follows in entirely the same manner as in Lemma~\ref{lemma:bregnori}, and (a) follows from (b) and (c).
\end{proof}
\begin{corollary}\label{cor:breg}
    Let $\Omega$ be convex and closed, and let $\phi: \Omega\to \mathbb R$ be a strictly convex continuous function, differentiable on $\widehat \Omega \in \{ \mathrm{ri}(\Omega), \Omega\}$, as in Section~\ref{section:loss}, and define the Bregman divergence $D_\phi: \Omega\times \widehat \Omega \to \mathbb R$ by
    \[D_\phi(x,y) = \phi(x) - \phi(y) - \langle x-y, \nabla \phi(y)\rangle.\]
    Let $(\Xi, \mathcal F, \mu)$ be a probability space and let $X: \Xi\to \Omega \subset V$ be a random vector such that 
    \begin{enumerate}
        \item $\mathbb E_\mu\|X\| < \infty$, and if $\widehat \Omega = \mathrm{ri}(\Omega)$, then $\mathbb E_\mu [X] \in \mathrm{ri}(\Omega). $
        \item $\mathbb E_\mu[D_\phi(X,\mathbb E_\mu(X))] < \infty$.
    \end{enumerate}
    Then it holds 
    \begin{enumerate}
        \item[(a)] $\mathbb E_\mu[X] $ is the unique minimizer of $y\mapsto \mathbb E_\mu[D_\phi(X,y)]$ over $\widehat \Omega$,
        \item[(b)] $\mathbb E_\mu[D_\phi(X,y)] = D_\phi(\mathbb E_\mu[X], y)  + \mathbb E_\mu[D_\phi(X,\mathbb E_\mu[X])]$ for all $y\in \widehat \Omega$,
        \item[(c)] $D_\phi(x,y) = 0 \iff x=y$ and $D_\phi(x,y) >0$ if $x\neq y$ over $x\in \Omega$ and $y\in \widehat \Omega$.
    \end{enumerate}
\end{corollary}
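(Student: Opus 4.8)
The plan is to prove the corollary by a direct case analysis on the definition of $\widehat{\Omega} \in \{\mathrm{ri}(\Omega), \Omega\}$, reducing each branch to one of the two preceding lemmas. The corollary is deliberately stated so that its combined hypotheses specialize exactly to those of Lemma~\ref{lemma:bregnori} when $\phi$ is of type (i) and to those of Lemma~\ref{lemma:bregri} when $\phi$ is of type (ii), so no new analytic estimate is required beyond verifying that this alignment holds in each case.

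First I would treat the case $\widehat{\Omega} = \Omega$, i.e.\ $\phi$ is differentiable on all of $\Omega$. Here $X$ satisfies $\mathbb{E}_\mu\|X\| < \infty$ and $\mathbb{E}_\mu[D_\phi(X, \mathbb{E}_\mu[X])] < \infty$, which are precisely the two integrability hypotheses of Lemma~\ref{lemma:bregnori}. Note that the auxiliary requirement $\mathbb{E}_\mu[X] \in \mathrm{ri}(\Omega)$ is not imposed in this branch, and it is unnecessary because $\mathbb{E}_\mu[X] \in \Omega = \widehat{\Omega}$ holds automatically by convexity and closedness of $\Omega$ (exactly as in the proof of Lemma~\ref{lemma:bregnori}(b)). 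Applying Lemma~\ref{lemma:bregnori} then yields (a), (b), and (c) verbatim.

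Next I would treat the case $\widehat{\Omega} = \mathrm{ri}(\Omega)$, i.e.\ $\phi$ is continuous on $\Omega$ and differentiable on $\mathrm{ri}(\Omega)$. Now the hypotheses explicitly include $\mathbb{E}_\mu[X] \in \mathrm{ri}(\Omega)$ alongside the same two integrability conditions, matching exactly the assumptions of Lemma~\ref{lemma:bregri}. Applying that lemma gives (a), (b), and (c) over $x \in \Omega$ and $y \in \mathrm{ri}(\Omega) = \widehat{\Omega}$, which is the stated conclusion in this branch.

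Since these two cases exhaust the possibilities for $\widehat{\Omega}$, the corollary follows. The only point that demands care---and the nearest thing to an obstacle---is the bookkeeping: the hypotheses of the corollary are written in a unified, conditional form, so I must check that they collapse to the correct lemma's assumptions in each branch, and in particular that the $\mathrm{ri}(\Omega)$-membership condition is genuinely vacuous (and automatically satisfied) in the type-(i) branch. No genuinely new argument is needed.
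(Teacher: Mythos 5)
Your proposal is correct and is essentially identical to the paper's own proof, which simply observes that the case $\widehat\Omega = \Omega$ is Lemma~\ref{lemma:bregnori} and the case $\widehat\Omega = \mathrm{ri}(\Omega)$ is Lemma~\ref{lemma:bregri}. Your extra remark that the $\mathrm{ri}(\Omega)$-membership hypothesis is vacuous in the type-(i) branch is accurate bookkeeping that the paper leaves implicit.
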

\begin{proof}
    If $\widehat \Omega = \Omega$, then this is Lemma~\ref{lemma:bregnori}. If $\widehat \Omega = \mathrm{ri}(\Omega)$, then this is Lemma~\ref{lemma:bregri}.
\end{proof}

\section{Examples of Bregman Divergences}\label{app:bregex}

Here, we present several examples of Bregman divergences, e.g., as in \cite{gmcite} and \cite{Lipman2024FlowMG}. 

\begin{example}[Time-Scaled Bregman Divergence]\label{ex:timebreg}
     Let $\phi_t: \Omega_t\to \mathbb R$ be a strictly convex differentiable function from the closed convex set $\Omega_t \subset V_t$ where $(V_t, \langle\cdot,\cdot\rangle_t)$ is an inner product space. Let $w(t) >0$ and consider the mapping $\psi_t(x)  = w(t) \phi_t(x)$. Under these conditions, 
    \begin{align*}
    D_{\psi_t}(y,x) &= \psi_t(y) - \psi_t(x) - \langle y-x, \nabla \psi_t(x)\rangle  \\
    &= w(t)\Big(\phi_t(y) - \phi_t(x) - \langle y-x, \nabla \phi_t(x) \rangle\Big) \\
    &= w(t) D_{\phi_t}(y,x).
    \end{align*}
    The above shows that $w(t) D_{\phi_t}(y,x)$ defines a Bregman divergence for each $t\in[0,1]$. A critical example is when $w(t) \equiv C > 0 $, some positive constant, which in particular shows that a Bregman divergence rescaled by a positive constant is still a Bregman divergence.
\end{example}
\begin{example}[MSE]\label{ex:msebreg}
    Let $\phi_\mathrm{MSE}: \mathbb R^N\to \mathbb R$ be the strictly convex and differentiable function $\phi_\mathrm{MSE}(x) = \|x\|^2 = \sum_{i=1}^Nx^ix^i$, using superscripts for components. Then the Bregman divergence associated to $\phi_\mathrm{MSE}$ is given by 
    \begin{align*}
    D_{\mathrm{MSE}}(y,x) &= \phi_\mathrm{MSE}(y) - \phi_\mathrm{MSE}(x) - \langle y-x, \nabla \phi_\mathrm{MSE}(x)\rangle \\
    &= \sum_{i=1}^N( y^i y^i - x^ix^i - 2(y^i-x^i)(x^i)) =\|y-x\|^2. \end{align*}
\end{example}
\begin{example}[Separable Bregman Divergences]\label{ex:bregsep}
    Let $\phi_i: \Omega_i \to \mathbb R$ be strictly convex and differentiable functions for $i=1,\ldots, N$. Suppose further that $\Omega_i \subset V_i$ is convex and closed, where $(V_i, \langle \cdot,\cdot\rangle_i) $ is an inner product space. Consider $V = \bigoplus_{i=1}^N V_i$ and $\Omega = \prod_{i=1}^N \Omega_i$, and let the inner product on $V$ be given by $\langle y, x\rangle = \sum_{i=1}^N \langle y^i, x^i\rangle_i$, using superscripts to denote the $i$'th-coordinate. 
    
    Notice that $\Omega$ is convex and closed, and define $\Phi: \Omega \to \mathbb R$ by $\Phi(x) = \sum_{i=1}^N \phi_i(x^i)$. Then $\Phi$ is strictly convex and differentiable. This gives rise to a \emph{separable} Bregman divergence, where $D_{\phi_1},\ldots, D_{\phi_N}$ act separately on each of the $N$-components:
    \begin{align*}
    D_\Phi(y,x) &= \Phi(y) - \Phi(x) - \langle y-x, \nabla \Phi(x)\rangle \\
    &=  \sum_{i=1}^N \Big(\phi_i(y^i) - \phi_i(x^i) - \langle y^i - x^i, \nabla \phi_i(x^i)\rangle_{i} \Big)\\
    &=\sum_{i=1}^N D_{\phi_i}(y^i,x^i).
    \end{align*}
\end{example}
\begin{remark}
    There is a natural analogue to Example~\ref{ex:bregsep} when each Bregman divergence acts on the relative interior via $D_{\phi_i}: \Omega_i \times \mathrm{ri}(\Omega_i) \to \mathbb R$, since $\mathrm{ri}(\prod_{i=1}^n \Omega_i) = \prod_{i=1}^n \mathrm{ri}(\Omega_i)$, noting that each acting on the relative interior should not be particularly restrictive on the class of loss functions since $\mathrm{ri}(V_i) = V_i$. 
\end{remark}
\begin{remark}
    In the next two examples, we show two Bregman divergences up to a constant in the $y$, i.e. the left slot, since the model is passed to the right slot. We also consider Bregman divergences as mappings $D: \Omega\times \mathrm{ri}(\Omega) \to  \mathbb R$ in these examples, as discussed in Section~\ref{section:loss} and Lemma~\ref{lemma:bregri}. Note that $\mathrm{ri}(\Omega)$ is the interior of $\Omega$ within the affine hull of $\Omega$ --- this is discussed in more detail in Remark~\ref{remark:bregex}.
\end{remark}
\begin{example}[Poisson-Style Bregman]\label{ex:bregpoiss}
    Let $\phi_\mathrm{Poiss}: [0,\infty)\to \mathbb R$ be given by 
    \[\phi_\mathrm{Poiss}(x) = \begin{cases}
        x \log x & \text{if} \quad x\in (0,\infty),\\
        0 & \text{if} \quad x =0.
    \end{cases}\]
    A computation shows that $\phi$ is strictly convex and continuous on $[0,\infty)$ and differentiable on $\mathrm{ri}([0,\infty)) = (0,\infty)$. Its associated Bregman divergence is $D_\mathrm{Poiss}: [0,\infty)\times (0,\infty)\to \mathbb R$,
    \[
    D_{\mathrm{Poiss}}(y,x) =y\log y - x\log x - (y-x)(1+\log x) =  x - y\log x + f(y),
    \]
    where $f(y) = y \log y -y$ is a function only of $y$. 
\end{example}
\begin{example}[Binary Cross Entropy] \label{ex:bregbce}
    Let $\phi_\mathrm{BCE}: [0,1] \to \mathbb R$ be given by 
    \[\phi_\mathrm{BCE}(x) = \begin{cases}
        x\log x + (1-x)\log(1-x) &\text{if} \quad x\in (0,1) \\
        0 &\text{if} \quad x\in \{0,1\}.
    \end{cases}\]
    Then $\phi$ is strictly convex and continuous on $[0,1]$ and differentiable on $\mathrm{ri}([0,1]) = (0,1)$. Its associated Bregman divergence is $D_\mathrm{BCE}: [0,1] \times (0,1) \to \mathbb R$,
    \begin{align*}
    D_{\mathrm{BCE}}(y,x) &= D_{\mathrm{Poiss}}(y,x) + D_{\mathrm{Poiss}}(1-y,1-x)\\
    &= -[y\log x + (1-y)\log (1-x)] + g(y),
    \end{align*}
    where $g(y) = y\log (y) + (1-y)\log(1-y)$.
\end{example}

\section{Regularity conditions inherited from Generator Matching}\label{app:gm}

We repeat a few basic definitions from Generator Matching (GM) \citep{gmcite} and their regularity conditions. The infinitesimal generator $\mathcal L_t$ of a process $X_t$ acts on a test function $f \in \mathcal T$ via 
\[\mathcal L_t f(x) = \frac{d}{dh}\Bigg|_{h=0} \Big(\mathbb E[f(X_{t+h})|X_t = x]\Big),\]
and analogously, the conditional infinitesimal generator $\mathcal L_t^z$ of a conditional Markov process $X_t^z$ is defined for each $z\in \mathcal Z$ by 
\begin{align*}
\mathcal L_t^z f(x) &= \frac{d}{dh}\Bigg|_{h=0}\Big(\mathbb E[f(X_{t+h}^z)|X_t^z = x] \Big).
\end{align*}
 Also following GM, we denote 
\[\langle \mu, f\rangle := \mathbb E_{x\sim \mu}[f(x)] = \int f(x) \mu(dx)\]
for the duality pairing between a Borel probability measure $\mu$ on our state space $S$ (assumed to be a Polish metric space) and a test function $f \in \mathcal T \subset C_0(S)$, denoting $C_0(S)$ for the functions that vanish at infinity. That is, $f\in C_0(S)$ if for all $\varepsilon >0$ there is a compact $K$ such that $|f(x)| < \varepsilon$ for all $x\in S\setminus K$. In accordance with GM Section A.2., we make the following regularity assumptions:
\begin{enumerate}
    \item\label{assump:gm1} The Markov process $(X_t)_{0\leq t\leq 1}$ associated to $\mathcal L_t$ is Feller, in the sense defined in GM Appendix~A.1.2. In particular, we require that $\mathcal L_t f\in C_0(S)$ for all $f\in \mathcal T$. 
    \item\label{assump:gm2} In each time interval $[s,t]$, the expected number of discontinuities of $u\mapsto X_u$ is finite.
    \item\label{assump:gm3} There is a dense subspace $\mathcal T \subset C_0(S)$ satisfying i) $\mathcal T \subset \mathrm{dom}(\mathcal L_t)$, and ii) the function $t\mapsto \mathcal L_tf$ is continuous for any $f\in \mathcal T$. Moreover, two probability distributions $\mu_1$ and $\mu_2$  are equal if and only if $\mathbb E_{x\sim \mu_1}[f(x)] = \mathbb E_{x\sim \mu_2} [f(x)]$ holds for all $f\in \mathcal T$. 
    \item \label{assump:gm4} Any probability path $(p_t)_{0\leq t \leq 1}$ satisfies that $t\mapsto \langle p_t, f\rangle $ is continuous in $t$ for all $f\in \mathcal T$. 
    \item \label{assump:gm5} If $(p_t)_{0\leq t\leq 1}$ is a probability path on $S$ and $X_t$ is the Markov process associated to $\mathcal L_t$, then  
    \[X_0 \sim p_0 \quad\text{and}\quad \partial_t \langle p_t, f \rangle = \langle p_t, \mathcal L_t f \rangle \implies X_t \sim p_t.\]
\end{enumerate}

\begin{theorem}\label{thm:dualcont}
    Under the GM regularity assumptions found above, for every $f\in \mathcal T$ the map $t\mapsto \langle p_t, \mathcal L_t f\rangle$ is continuous on $[0,1]$.
\end{theorem}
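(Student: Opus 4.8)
The plan is to fix a point $t_0 \in [0,1]$ and show that $\langle p_t, \mathcal L_t f\rangle \to \langle p_{t_0}, \mathcal L_{t_0} f\rangle$ as $t \to t_0$ (one-sided at the endpoints). The natural device is to split the difference so that the time-dependence of the integrand $\mathcal L_t f$ and that of the measure $p_t$ are handled separately:
\[
\langle p_t, \mathcal L_t f\rangle - \langle p_{t_0}, \mathcal L_{t_0} f\rangle = \langle p_t,\, \mathcal L_t f - \mathcal L_{t_0} f\rangle + \langle p_t - p_{t_0},\, \mathcal L_{t_0} f\rangle.
\]

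First I would control the first term. Since each $p_t$ is a probability measure, $|\langle p_t, g\rangle| \le \|g\|_\infty$ for any $g \in C_0(S)$, so $|\langle p_t, \mathcal L_t f - \mathcal L_{t_0} f\rangle| \le \|\mathcal L_t f - \mathcal L_{t_0} f\|_\infty$. By Assumption~\ref{assump:gm3}, the map $t \mapsto \mathcal L_t f$ is continuous into $C_0(S)$ (with the sup norm), so this bound tends to $0$ as $t \to t_0$, uniformly in the measure against which it is paired.

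The second term requires testing the \emph{fixed} function $\mathcal L_{t_0} f$ --- which lies in $C_0(S)$ by the Feller Assumption~\ref{assump:gm1} --- against the difference $p_t - p_{t_0}$. Here Assumption~\ref{assump:gm4} is not immediately applicable, since it guarantees continuity of $t \mapsto \langle p_t, h\rangle$ only for $h \in \mathcal T$, whereas $\mathcal L_{t_0} f$ need not belong to $\mathcal T$. I would therefore first establish the intermediate claim that $t \mapsto \langle p_t, g\rangle$ is continuous for \emph{every} $g \in C_0(S)$, by a density-and-uniform-bound argument: given $\varepsilon > 0$, use the density of $\mathcal T$ in $C_0(S)$ (Assumption~\ref{assump:gm3}) to pick $\tilde g \in \mathcal T$ with $\|g - \tilde g\|_\infty < \varepsilon$, and decompose
\[
|\langle p_t, g\rangle - \langle p_{t_0}, g\rangle| \le |\langle p_t, g - \tilde g\rangle| + |\langle p_t, \tilde g\rangle - \langle p_{t_0}, \tilde g\rangle| + |\langle p_{t_0}, \tilde g - g\rangle|.
\]
The outer two terms are each bounded by $\|g - \tilde g\|_\infty < \varepsilon$ (again using that the $p_t$ are probability measures), while the middle term tends to $0$ by Assumption~\ref{assump:gm4}; letting $\varepsilon \downarrow 0$ yields the claim. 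Applying it with $g = \mathcal L_{t_0} f$ shows the second term vanishes as $t \to t_0$.

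Combining the two estimates gives $\langle p_t, \mathcal L_t f\rangle \to \langle p_{t_0}, \mathcal L_{t_0} f\rangle$, which is the asserted continuity. The main obstacle is precisely the mismatch in the second term: the integrand $\mathcal L_{t_0} f$ lives in $C_0(S)$ but the hypothesis on the path only controls pairings against $\mathcal T$. The density of $\mathcal T$ in $C_0(S)$, together with the uniform total-mass bound enjoyed by probability measures, is what bridges this gap, and the sup-norm interpretation of the continuity in Assumption~\ref{assump:gm3} is what makes the first term tractable.
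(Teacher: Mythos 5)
Your proposal is correct and follows essentially the same route as the paper's proof: the same splitting of the difference into $\langle p_t, \mathcal L_t f - \mathcal L_{t_0} f\rangle$ and $\langle p_t - p_{t_0}, \mathcal L_{t_0} f\rangle$, the same sup-norm bound via Assumption~\ref{assump:gm3} for the first term, and the same intermediate extension of Assumption~\ref{assump:gm4} from $\mathcal T$ to all of $C_0(S)$ by density for the second. The only cosmetic difference is that you phrase the density step as a three-epsilon estimate while the paper phrases it as a uniform limit of continuous functions.
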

\begin{proof}
    First we show that $t\mapsto \langle p_t,g\rangle$ is continuous for all $g\in C_0(S)$. By Assumption~\ref{assump:gm3}, there is a sequence $(g_n)_{n=1}^\infty \subset \mathcal T$ such that $\|g_n -g \|_\infty \to 0$ as $n\to \infty$, and we see that 
    \begin{align*}
    \sup_{t\in [0,1]}|\langle p_t, g_n \rangle - \langle p_t, g\rangle| =  \sup_{t\in [0,1]}\left|\int (g_n(x)- g(x))p_t(dx)\right| &\leq \|g_n -g \|_\infty  \\
    &\to 0, \text{ as } n\to \infty.
    \end{align*}
    This shows that $t\mapsto \langle p_t, g_n\rangle $ converges uniformly to $t\mapsto \langle p_t, g\rangle$. Since $t\mapsto \langle p_t,g_n\rangle$ is continuous for all $n\in \mathbb N$ and continuity is preserved by uniform limits, we have that $t\mapsto \langle p_t, g\rangle$ is continuous for any fixed $g\in C_0(S)$. Now fix $f\in \mathcal T $ and let $t_n \to t$ in $[0,1]$. Using the triangle inequality, we can bound
    \begin{align*}
    |\langle p_{t_n}, \mathcal L_{t_n} f\rangle - \langle p_t, \mathcal L_t f\rangle | &= |\langle p_{t_n}, \mathcal L_{t_n} f - \mathcal L_t f + \mathcal L_tf\rangle -\langle p_t, \mathcal L_t f\rangle \rangle|\\
    &= |\langle p_{t_n}, \mathcal L_{t_n} f - \mathcal L_tf\rangle + \langle p_{t_n} - p_t, \mathcal L_tf\rangle|\\
    &\leq |\langle p_{t_n}, \mathcal L_{t_n} f - \mathcal L_t f\rangle| + |\langle p_{t_n} - p_t, \mathcal L_tf\rangle|.
    \end{align*}
    Now, we have that $\mathcal L_{t_n} f\to \mathcal L_{t} f$ as $n\to\infty$ by Assumption~\ref{assump:gm3}, so that
    \[|\langle p_{t_n}, \mathcal L_{t_n} f - \mathcal L_t f\rangle| \leq  \|\mathcal L_{t_n} f- \mathcal L_tf\|_\infty \to 0, \text{ as } n\to\infty. \]
    Moreover, for all $f\in \mathcal T$ it holds $\mathcal L_t f\in C_0(S)$ since $X_t$ is Feller by Assumption~\ref{assump:gm1}, so we have that \[ |\langle p_{t_n} - p_t, \mathcal L_tf\rangle | = |\langle p_{t_n}, \mathcal L_t f\rangle - \langle p_t, \mathcal L_t f\rangle |\to 0,\text{ as } n\to\infty\] since we showed the map $t\mapsto \langle p_t, g\rangle$ was continuous for all $g\in C_0(S)$ (which implies that $\langle p_{t_n}, g\rangle \to \langle p_t, g\rangle$ if $t_n \to t$ and $g\in \mathcal T$). Combining the above, we get
    \begin{align*}
    \limsup_{n\to\infty} |\langle p_{t_n}, \mathcal L_{t_n} f\rangle - \langle p_t, \mathcal L_t f\rangle | &\leq \limsup_{n\to\infty} \Big(|\langle p_{t_n}, \mathcal L_{t_n} f - \mathcal L_t f\rangle| + |\langle p_{t_n} - p_t, \mathcal L_tf\rangle|\Big) = 0 ,
    \end{align*}
    so that 
    \[\lim_{n\to\infty} \langle p_{t_n}, \mathcal L_{t_n} f\rangle = \langle p_t, \mathcal L_t f\rangle\]
    and we can conclude that $t\mapsto \langle p_t,\mathcal L_t f\rangle $ is continuous.
\end{proof}
\section{Regularity conditions for linear parametrizations of conditional generators}\label{app:reglinparam}

\begin{lemma}\label{lemma:linparamreg}
    For all times $t$ and all $x\in S$, let $\dim V_{t,x} < \infty$ and let 
    \[ \mathcal L_t^z f(x) = \langle \mathcal K_{t,x} f; F_t^z(x)\rangle_{V_{t,x}}\]
    be a linear parametrization of $\mathcal L_t^z$. Then, whenever $\mathbb E_{Z\sim p_{Z|t}(dz|x)} \|F_t^Z(x)\|_{V_{t,x}} < \infty$, we have 
    \[\mathbb E_{Z\sim p_{Z|t}(dz|x)} [\langle \mathcal K_{t,x}f, F_t^Z(x)\rangle_{V_{t,x}}] = \langle \mathcal K_{t,x}f, \mathbb E_{Z\sim p_{Z|t}(dz|x)}[F_t^Z(x)]\rangle_{V_{t,x}}.\]
\end{lemma}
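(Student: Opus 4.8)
The plan is to recognize the identity as the statement that a fixed bounded linear functional commutes with a vector-valued expectation, and to prove it by reducing to ordinary linearity of the scalar expectation using the finite-dimensionality of $V_{t,x}$. First I would fix the time $t$ and state $x$, abbreviate $V := V_{t,x}$ with its inner product $\langle\cdot,\cdot\rangle := \langle\cdot,\cdot\rangle_{V_{t,x}}$, and set $u := \mathcal K_{t,x}f$, which is a single fixed element of $V$ (the test function $f$ is held fixed throughout). The right-hand side then reads $\langle u, \mathbb E_Z[F_t^Z(x)]\rangle$ and the left-hand side reads $\mathbb E_Z[\langle u, F_t^Z(x)\rangle]$, so the claim is exactly that $\langle u, \cdot\rangle$ passes through the expectation.

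The next step is to make sense of the vector-valued expectation $\mathbb E_Z[F_t^Z(x)]$ and to check the required integrability. Choosing an orthonormal basis $e_1,\dots,e_n$ of $V$ (possible since $\dim V < \infty$), I would define the scalar coordinate maps $c_i(z) := \langle F_t^z(x), e_i\rangle$, so that $F_t^z(x) = \sum_{i=1}^n c_i(z) e_i$ and $\|F_t^z(x)\|^2 = \sum_i c_i(z)^2$. By Cauchy--Schwarz $|c_i(z)| \le \|F_t^z(x)\|_{V_{t,x}}$, so the hypothesis $\mathbb E_Z\|F_t^Z(x)\|_{V_{t,x}} < \infty$ forces each $c_i$ to be integrable under $p_{Z|t}(dz|x)$. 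This lets me define $\mathbb E_Z[F_t^Z(x)] := \sum_{i=1}^n \mathbb E_Z[c_i(Z)]\, e_i \in V$ unambiguously, and the same bound gives $|\langle u, F_t^Z(x)\rangle| \le \|u\|\,\|F_t^Z(x)\|_{V_{t,x}}$, so the scalar integrand on the left is integrable as well.

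With well-definedness secured, I would finish by expanding both sides in the basis and invoking finite linearity of the scalar expectation: writing $\langle u, F_t^Z(x)\rangle = \sum_i c_i(Z)\langle u, e_i\rangle$ and taking expectations, the sum over the finitely many integrable terms commutes with $\mathbb E_Z$, yielding $\mathbb E_Z[\langle u, F_t^Z(x)\rangle] = \sum_i \mathbb E_Z[c_i(Z)]\langle u, e_i\rangle = \langle u, \sum_i \mathbb E_Z[c_i(Z)] e_i\rangle = \langle u, \mathbb E_Z[F_t^Z(x)]\rangle$, which is the claim.

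There is no deep obstacle here; the only point requiring care is the well-definedness of the vector-valued expectation and the justification for interchanging it with the inner product. Both are delivered by the finite-dimensionality assumption together with the integrability hypothesis $\mathbb E_Z\|F_t^Z(x)\|_{V_{t,x}} < \infty$, which is exactly Assumption~\ref{assump:linparam}. This is precisely where finite-dimensionality is used: in the Hilbert-space setting flagged in the Remark one would instead need the Bochner integral and the fact that bounded operators commute with it, so the present componentwise argument is the cheapest route given $\dim V_{t,x} < \infty$.
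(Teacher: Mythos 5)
Your proposal is correct and follows essentially the same route as the paper's proof: fix an orthonormal basis of the finite-dimensional space $V_{t,x}$, use Cauchy--Schwarz to get componentwise integrability from $\mathbb E_Z\|F_t^Z(x)\|_{V_{t,x}}<\infty$, define the vector-valued expectation coordinatewise, and commute the finite sum with the scalar expectation. No substantive differences.
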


\begin{proof}
Let $(e_i^{t,x})_{i=1}^{N_{t,x}}$ be an orthonormal basis for $V_{t,x}$ --- such a basis can be obtained by the Gram-Schmidt algorithm. Then writing $v^i := \langle v, e_i^{t,x}\rangle_{V_{t,x}}$ and $w^j := \langle w, e_j^{t,x}\rangle_{V_{t,x}}$, we have:
    \begin{align*} 
    \langle v, w\rangle_{V_{t,x}} &=\textstyle \langle \sum_{i=1}^{N_{t,x}}v^i e_{i}^{t,x}, \sum_{j=1}^{N_{t,x}}w^je_j^{t,x}\rangle  \\
    &= \textstyle \sum_{i=1}^{N_{t,x}}v^i \langle e_i^{t,x}, \sum_{j=1}^{N_{t,x}}w^je_{j}^{t,x}\rangle\\
    &= \textstyle \sum_{i=1}^{N_{t,x}} v^i \sum_{j=1}^{N_{t,x}} w^j \langle e_i^{t,x}, e_j^{t,x}\rangle \\
    &= \textstyle \sum_{i=1}^{N_{t,x}} v^i \sum_{j=1}^{N_{t,x}} w^j \delta_{ij} \\
    &= \textstyle \sum_{i=1}^{N_{t,x}}  v^i w^i.
    \end{align*}
        Denote $(\mathcal K_{t,x}f)_i := \langle \mathcal K_{t,x}f, e_i^{t,x}\rangle_{V_{t,x}},$ and similarly denote $(F_{t}^Z(x))_i := \langle F_t^Z(x), e_i^{t,x}\rangle_{V_{t,x}}. $ Then, it holds 
        \[\mathcal K_{t,x}f = \sum_{i=1}^{N_{t,x}}(\mathcal K_{t,x}f)_ie_i^{t,x} \qquad\text{and} \qquad  F_t^Z(x) = \sum_{i=1}^{N_{t,x}} (F_{t}^Z(x))_ie_i^{t,x}.\]
        By the Cauchy-Schwarz inequality, we can bound 
    \[|(F_{t}^Z(x))_i|\leq \|F_t^Z(x)\| \cdot \|e_i^{t,x}\| = \|F_t^Z(x)\| < \infty.\]
    The $V_{t,x}$-valued expectation is defined by taking the expectation along the components
    \[\mathbb E_{Z\sim p_{Z|t}(dz|x)}[F_t^Z(x)] = \sum_{i=1}^{N_{t,x}}\mathbb E_{Z\sim p_{Z|t}(dz|x)}[(F_t^Z(x))_i]e_i^{t,x}.\]
    In particular, we have
    \[\mathbb E_{Z\sim p_{Z|t}(dz|x)}[F_t^Z(x)]_i := \langle \mathbb E_{Z\sim p_{Z|t}(dz|x)}[F_t^Z(x)], e_i^{t,x}\rangle =  \mathbb E_{Z\sim p_{Z|t}(dz|x)}[(F_t^Z(x))_i] \]
    and we therefore obtain
     \begin{align*}
        \mathbb E[\langle \mathcal K_{t,x}f, F_t^Z(x)\rangle_{V_{t,x}}] &= \textstyle \mathbb E_{Z\sim p_{Z|t}(dz|x)}[\sum_{i=1}^{N_{t,x}}(\mathcal K_{t,x} f)_i \cdot (F_t^Z(x))_i] \\
        &= \textstyle \sum_{i=1}^{N_{t,x}} (\mathcal K_{t,x}f)_i \mathbb E_{Z\sim p_{Z|t}(dz|x)}[F_t^Z(x)_i] \\
        &= \textstyle \sum_{i=1}^{N_{t,x}} (\mathcal K_{t,x}f)_i \mathbb E_{Z\sim p_{Z|t}(dz|x)}[F_t^Z(x)]_i \\
        &= \langle \mathcal K_{t,x}f, \mathbb E_{Z\sim p_{Z|t}(dz|x)}[F_t^Z(x)]\rangle_{V_{t,x}}.
     \end{align*}
     
\end{proof}
\section*{Acknowledgement}
This project received support from the Swedish Research Council (2024-00390 and 2023-02516) and the Knut and Alice Wallenberg Foundation (2024.0039) to B.M.

\bibliographystyle{plainnat}

\bibliography{library}

\end{document}